\newcommand{\etal}{\emph{et al.}\xspace}
\newcommand{\ie}{\emph{i.e.}\xspace}
\newcommand{\figLabel}{Figure\xspace}
\newcommand{\eqLabel}{Equation\xspace}
\newcommand{\secLabel}{Section\xspace}
\newcommand{\tblLabel}{Table\xspace}
\newcommand{\mysection}[1]{\vspace{3pt}\noindent\textbf{#1.}}
\definecolor{orange}{rgb}{1,0.5,0}
\definecolor{maroon}{rgb}{0.51,0,0}
\newtheorem{proposition}{Proposition}
\newtheorem{theorem}{Theorem}
\begin{document}
%
\title{GPNet: Simplifying Graph Neural Networks via Multi-channel Geometric Polynomials}
%
%
%
%

\author{Xun~Liu$^{1,2}$, Alex Hay-Man~Ng$^{1,3}${$^{\ast}$}, Fangyuan~Lei$^{4,5}${$^{\ast}$}, \emph{Member, IEEE}, Yikuan~Zhang$^2$, Zhengmin~Li$^{5}$\\
\thanks{*Equal corresponding author}
$^1$School of Information Engineering, Guangdong University of Technology\\
$^2$Department of Electronics, Software Engineering Institute of Guangzhou\\
$^3$School of Civil and Transportation Engineering, Guangdong University of Technology\\
$^4$Guangdong Provincial Key Laboratory of Intellectual Property and Big Data, Guangdong Polytechnic Normal University\\
$^5$School of Cyber Security, Guangdong Polytechnic Normal University\\
\texttt{{liuxun.stf}@gmail.com}
\quad \texttt{{hayman.ng}@gdut.edu.cn}
\quad \texttt{{leify}@gpnu.edu.cn}
\quad \texttt{{zyk}@mail.seig.edu.cn}
\quad \texttt{{lizhengming2004}@126.com}
}

\IEEEtitleabstractindextext{%
\begin{abstract}
Graph Neural Networks (GNNs) are a promising deep learning approach for circumventing many real-world problems on graph-structured data. However, these models usually have at least one of four fundamental limitations: over-smoothing, over-fitting, difficult to train, and strong homophily assumption. For example, Simple Graph Convolution (SGC) is known to suffer from the first and fourth limitations. To tackle these limitations, we identify a set of key designs including (D1) dilated convolution, (D2) multi-channel learning, (D3) self-attention score, and (D4) sign factor to boost learning from different types (\ie homophily and heterophily) and scales (\ie small, medium, and large) of networks, and combine them into a graph neural network, GPNet, a simple and efficient one-layer model. We theoretically analyze the model and show that it can approximate various graph filters by adjusting the self-attention score and sign factor. Experiments show that GPNet consistently outperforms baselines in terms of average rank, average accuracy, complexity, and parameters on semi-supervised and full-supervised tasks, and achieves competitive performance compared to state-of-the-art model with inductive learning task.
\end{abstract}

\begin{IEEEkeywords}
Graph Neural Network, Graph-structured Data, Arbitrary Graph Filter, Geometric Polynomial
\end{IEEEkeywords}}

\maketitle


%
\IEEEpeerreviewmaketitle


\IEEEraisesectionheading{\section{Introduction}\label{sec:introduction}}

%
%
%
%

\IEEEPARstart{M}{otivated} by the excellent performance of Deep Learning (DL) in a wide range of applications, researchers have attempted to generalize DL models to graph-structured data. Their methods are collectively referred to as Graph Neural Networks (GNNs).  Recently, GNNs have shown natural advantages on various graph learning tasks, including citation networks \cite{1,2,3}, traffic forecasting \cite{4,5,6}, recommendation system \cite{7,8}, social analysis \cite{9,10}, biology \cite{11,12}, drug discovery \cite{13,14}, and computer vision \cite{15,16}. However, these models have at least one of several fundamental limitations are summarized in the following four-fold.

\textbf{Limitation 1} \emph{Over-smoothing.} The convolution in many GNNs such as Graph Convolution Network (GCN) \cite{1} is essentially a special form of Laplacian smoothing \cite{17}. The smoothing operation obtains the smoothing node representations of a node and its close neighbors, thus making the features of nodes in the same cluster similar. By applying the smoothing operation repeatedly (stacking many layers, deep GNNs), the features of nodes in same connected component converge to the fixed value and are difficult to distinguish \cite{17,18}. This is the well-known problem of over-smoothing in GNNs. Residual connection \cite{19} is an effective technology of training very deep neural networks. However, applying the residual connection in GNNs merely slows down the over-smoothing \cite{1}. The classification performance of these GNNs drops as the number of layers increases (exceed two layers).

\textbf{Limitation 2} \emph{Over-fitting.} Several methods try to address the over-smoothing of deep GNNs with their redefined convolutions. Xu \etal \cite{20} use dense jump connections to flexibly leverage the features of various neighbors for each node. DropEdge \cite{21} randomly deletes a few edges from the input graph to relieve the concerns of over-smoothing. GCNII \cite{22} suggests that GNNs can be generalized to a deep model with the two techniques of initial residual and identity mapping. While these models widen the receptive field of GNNs by increasing the number of network layers, training such models on small graphs may suffer from over-fitting due to the increased substantial parameters to construct these deep GNNs.

\textbf{Limitation 3} \emph{Difficult to train.} To avoid the over-fitting, more training data and computational resources are required while keeping these models unchanged, which makes it difficult to train on large-scale graphs.

\textbf{Limitation 4} \emph{Strong homophily assumption.} The convolution of most GNNs such as GCN and Simple Graph Convolution (SGC) \cite{23} can be viewed as a special form of non-negative low-pass filter, which is the key to the success of these models for homophily datasets (similar nodes tend to connect with each other). However, the filter filters out the high-frequency signals of node features, thus being distant from optimal with heterophily datasets (Connected nodes usually belong to different classes). In other word, the low-frequency and high-frequency signals of the original features may be optimal for homophily and heterophily graphs, respectively. These observations are highlighted by \cite{24,25}.

A recent work tends to simplify GCN. Wu \etal \cite{23} propose SGC, an extremely efficient one-layer model, with competitive performance to GCN in various graph learning tasks, by repeatedly removing the nonlinearities between GCN layers and precomputing the fixed feature extraction. This implies that most classifiers in a wide range of applications are linear. Benefit from the design, SGC has an optimal speed and almost no the concerns of over-fitting, due to the single-layer framework and fixed feature extraction. Nevertheless, SGC suffers from the over-smoothing when capturing the features of neighboring nodes at long distances \cite{26}. In addition, SGC assumes strong homophily and may perform worse with heterophily datasets. Therefore, we require a model to tackle these limitations of SGC. If these limitations can be solved while retaining the advantages, the four fundamental problems of current GNN models are addressed naturally.

In this work, we propose Graph Neural Network via Multi-channel Geometric Polynomials (GPNet), a novel simple and efficient one-layer model that aims to learn to fit a distribution under optimal calculations and parameters on homophily and heterophily datasets (\figLabel \ref{fig:1}). From the one-layer design, GPNet has no concerns of overfitting and difficulty training on graphs (address limitations 2 and 3). GPNet can be viewed as a graph analogue of dilated convolution \cite{27}, which allows for enlarging the receptive field without loss of resolution by applying a dilation factor. In GPNet, we propose to combine the multi-channel geometric polynomials of the normalized adjacency matrices (with or without self-loops) of various dilation factors to increase the receptive field without bringing over-smoothing (address limitation 1). The main challenge is how to use signals of different frequencies to adapt to different types of networks. To tackle the challenge, we design the two simple techniques of self-attention score and sign factor to achieve performance improvement for both heterophily and homophily (address limitation 4). Theoretical analysis shows that GPNet can fit different types of filters via the adjustment of the self-attention score and sign factor to adapt to both homophily and heterophily networks. We summarize the main contributions as follows:

\textbf{Key Designs in Both Homophily and Heterophily.} We provide several key designs to generalize GNNs to heterophily settings without trading off accuracy in homophily: (D1) dilated convolution, (D2) multi-channel learning, (D3) self-attention score, and (D4) sign factor.

\textbf{GPNet Framework.} We propose GPNet, a simple and efficient one-layer model that learns graph representations for different types of networks in an end-to-end fashion, to tackle the four fundamental limitations of GNNs.

\textbf{Comprehensive Evaluation.} Extensive experiments on ten real-world networks demonstrate that GPNet achieves better accuracy with similar complexity and parameters over SGC, and compares favorably against other baselines in terms of classification accuracy, parameters, and complexity on both homophily and heterophily graphs.

The rest of the paper is organized as follows. In \secLabel \ref{sec:related}, we briefly introduce relevant GNN literature. We propose our key designs and GPNet architecture in \secLabel \ref{sec:methodology}. \secLabel \ref{sec:analysis} presents the analysis of SGC, the proposed GPNet, and relation between GPNet and representative models. In \secLabel \ref{sec:experiments}, we evaluate the performance of GPNet on various graph learning tasks. \secLabel \ref{sec:conclusion} summarizes the paper.





\section{Related Work}
\label{sec:related}
Motivated by the limited learning ability of Convolution Neural Networks (CNNs) for dealing with non-Euclidean applications, Spectral CNN \cite{28} and ChebyShev \cite{29} use graph signal theory to first extend CNNs on graphs. Kipf and Welling \cite{1} propose GCN by simplifying the Chebyshev polynomials of \cite{29}. Li \etal \cite{17} point out that each convolution of GCN is essentially equivalent to an operation of Laplacian smoothing, which brings the concerns of over-smoothing when repeatedly applying the operation. Many GNNs are generally shallow \cite{30} due to the over-smoothing. This shallow naturally limits the learning ability of obtaining the features of neighboring nodes at different distances. Several works design deep GNNs to improve the learning ability of graph representation. Li \etal \cite{31} apply the concepts of CNNs to construct a 56-layer GNN. Xu \etal \cite{20} propose three models of jumping knowledge (JK) networks based on three aggregation schemes of concatenation, max-pooling, and LSTM-attention to adjust the range of information aggregated by each node according to different positions and structures of the graph. Rong \etal \cite{21} propose the DropEdge architecture to relieve the concerns of over-fitting and over-smoothing in deep GNNs. By designing the two effective techniques of identity mapping and initial residual, Chen \etal \cite{22} propose Graph Convolutional Network via Initial Residual and Identity Mapping (GCNII) to achieve better performance compared to baselines on various semi-supervised and full-supervised tasks. Several researches devote to leverage the information based on sampling to improve the scalability of GNNs \cite{9,32,34,35}. Graph attentional models suggest adjusting edge weights at each layer to improve the expressive power of learning graph representations on various graph learning tasks \cite{2,37,38,39,40}. Several works directly capture the multi-hop neighborhood information of nodes through higher-order graph convolution \cite{3,41,42,43,44,add_2} instead of traditional convolutions with stacked layers.

Recently, researchers have focused on simpler and efficient models to significantly reduce training time in downstream applications. Thekumparampil \etal~\cite{37} design a linear GNN model that removes all intermediate non-linear activation functions to simplify computations. SGC~\cite{23} achieves comparable performance under optimal speed and parameters to GCN by removing the nonlinearities between GCN layers and collapsing the weight matrices between consecutive layers. gfNN~\cite{add_1} addresses the limitation that SGC and GCN may work worse in two cases of noisy features and nonlinear features spaces, while retaining the speed of SGC.

Nevertheless, most of the GNN models mentioned above are designed only for homophily datasets, which may not work well on heterophily datasets. Geom-GCN \cite{45}, ${\rm H_{2}GCN}$ \cite{24}, FAGCN \cite{46}, TDGNN \cite{47}, GPRGNN \cite{48}, and CPGNN \cite{25} that are explicitly designed to perform well on heterophily graphs. However, these models need to be improved in terms of efficiency, test accuracy, and parameters, and may fail to work on large-scale graphs.

\section{Methodology}
\label{sec:methodology}

\subsection{Preliminaries} \label{sec:Pre}

\mysection{Notations} We follow \cite{1} to describe an undirected graph $\mathcal{G}=(\mathcal{V}, \mathcal{E}, X, A)$ with $e$ edges and $n$ nodes (vertices), where $\mathcal{E}$ and
$\mathcal{V}$ denote the edge set and node set respectively,  $X \in R^{n\times d}$ is the initial feature matrix for all nodes of $\mathcal{G}$ assuming each node has $d$ features, and $A \in \{0, 1\}^{n\times n}$ is the adjacency matrix with each entry $a_{km}=1$ if there exists an edge (directly connected) between vertex $v_{k}$ and $v_{m}$, otherwise $a_{km}=0$. 
\begin{figure*}[!htb]
	\centering
	\includegraphics[width=\textwidth]{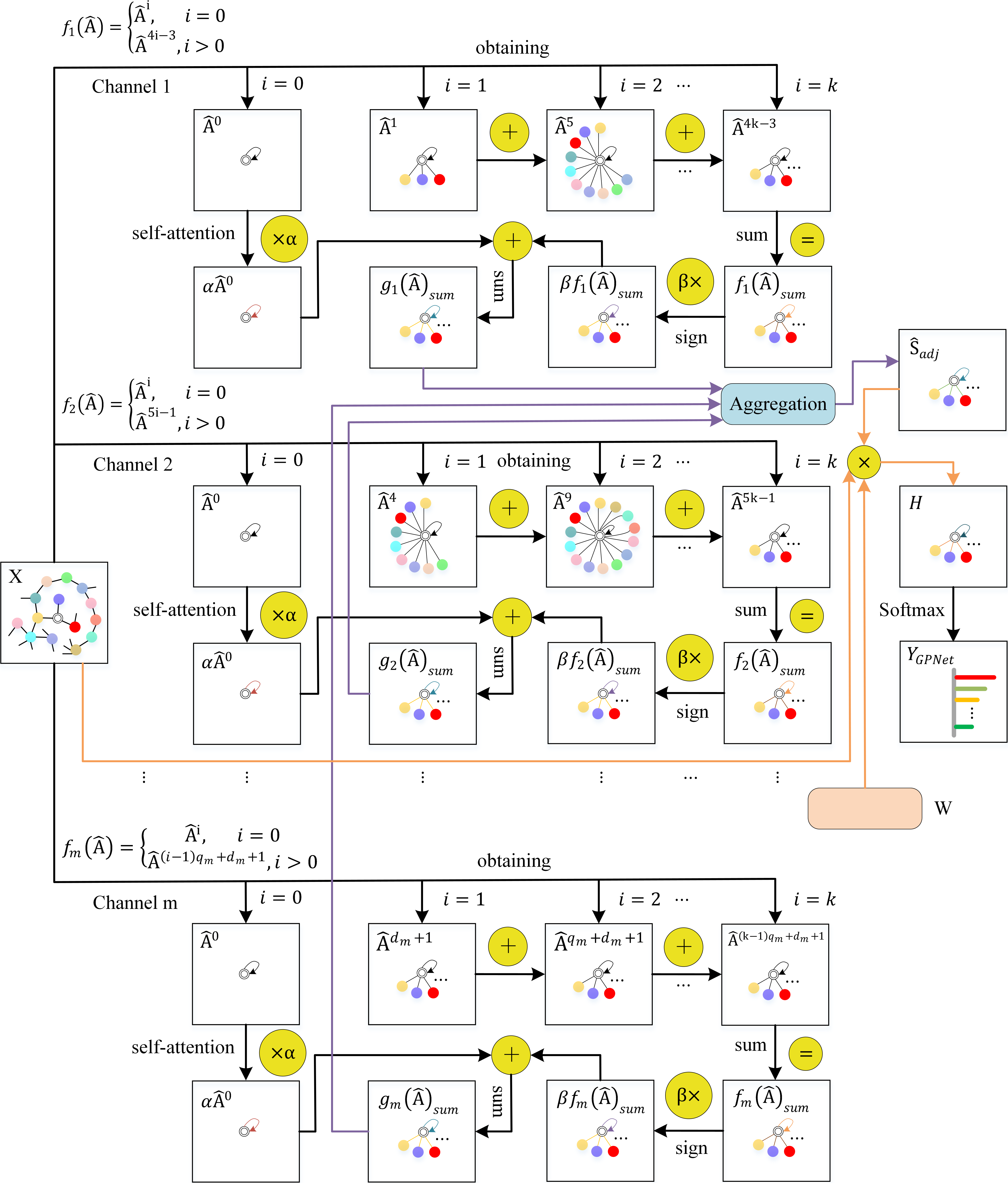}
	\caption{\textbf{The overall framework of GPNet} ($q_{0}=1$, $q_{1}=4$, $q_{2}=5$, $d_{1}=0$, $d_{2}=3$, self-loops).
	}
	\label{fig:1}
\end{figure*}
We can utilize an identity matrix $I$ to add the self-loops of $\mathcal{G}$. Most of GNN models leverage an adjacency matrix with self-loops $\tilde{A}$ to describe the edge relationships of $\mathcal{G}$, with $\tilde{A}=A+I$. The graph Laplacian matrix is defined as $L=D-A$, where $D$ is the degree matrix of $\mathcal{G}$. Its normalized version $L_{sym}=D^{-\frac{1}{2}}LD^{-\frac{1}{2}}$ is a symmetric positive semidefinite matrix with eigendecomposition $L=U\Lambda U^{T}$, where $U$ is orthonormal eigenvectors and $L=\Lambda=diag(\lambda_{1},\dots, \lambda_{n})$ denotes a diagonal matrix of eigenvalues $\lambda_{1},\dots, \lambda_{n}$.

\mysection{Graph Convolution Network} As described in GCN \cite{1}, each convolution in GCN updates node representations by capturing information from neighbors, which includes three stages: feature propagation, linear transformation, and nonlinear activation.

\textbf{Feature propagation} is the unique mechanism of aggregating information from neighbors in GCN compared to Multi-layer Perceptron (MLP). We can write the mechanism in sparse matrix form as:
\begin{equation} \label{eq1}
\hat{H}^{(l)}\gets S_{sym}H^{(l)} \,,
\end{equation}
where $H^{(l)}$ denotes the input node representations at layer $l$ with $H^{(l)}=X$ if $l=1$. We use $S_{sym}$ (a symmetric normalized adjacency matrix with added self-loops, with $S_{sym}=\tilde{D}^{-\frac{1}{2}}\tilde{A}\tilde{D}^{-\frac{1}{2}}$) to smooth the node representations locally along the edges between nodes. Here $\tilde{D}$ is the degree matrix of $\tilde{A}$.

\textbf{Linear transformation and nonlinear activation.} After the feature propagation, the smoothed representations are transformed linearly via a learned weight matrix $\theta^{(l)}$. Finally, a nonlinear activation function ReLU is widely used in GNNs to achieve the pointwise nonlinear transformation of the $l$-th layer.
\begin{equation} \label{eq2}
 H^{(l+1)}\gets \text{ReLU}(\hat{H}^{(l)}\theta^{(l)}) \,.
\end{equation}
Based on Equations \eqref{eq1}-\eqref{eq2}, we can write the following Equation for the convolution of GCN.
\begin{equation} \label{eq3}
 H^{(l+1)}\gets \text{ReLU}(S_{sym}H^{(l)}\theta^{(l)}) \,.
\end{equation}
By stacking the convolution twice and using a softmax classifier in the last layer to predict labels, we can write a classic 2-layer GCN:
\begin{equation} \label{eq4}
 Y_{\text{GCN}}=\text{softmax}(S_{sym}\text{ReLU}(S_{sym}X\theta^{(1)})\theta^{(2)}) \,,
\end{equation}
where $\theta^{(1)}$ and $\theta^{(2)}$ are different weight matrices.

\mysection{Simple graph convolution} By applying GCN layers repeatedly, we scale GCN to a $k$-layer architecture.
\begin{equation} \label{eq5}
    \hat{Y}=\text{softmax}(S_{sym}(\cdots \text{ReLU}(S_{sym}X\theta^{(1)})\cdots)\theta^{(k)}) \,.
\end{equation}
Suppose the nonlinearity between GCN layers is not critical to performance improvement. Therefore, for $k$-layer GCN, we delete all ReLU functions in the same way as \cite{23}. The $k$-layer GCN is:
\begin{equation} \label{eq6}
    \hat{Y}=\text{softmax}(S_{sym}(\cdots (S_{sym}X\theta^{(1)})\cdots)\theta^{(k)}) \,.
\end{equation}
By collapsing the repeated multiplication with the $S_{sym}$ into a single matrix ${S_{sym}}^{k}$ and letting $\theta=\theta^{(1)}\cdots\theta^{(k)}$ to simplify the parameters and calculations, the model becomes:
\begin{equation} \label{eq7}
    \hat{Y}_{\text{SGC}}=\text{softmax}({S_{sym}}^{k}X\theta) \,,
\end{equation}
which is called Simple Graph Convolution (SGC) \cite{23}.

\subsection{Key Designs} \label{sec:design}
In this section, we propose four key designs that help improve the performance of learning from different types (homophily and heterophily) and scales (small, medium, and large) of networks: (D1) dilated convolution, (D2) multi-channel learning, (D3) self-attention score, and (D4) sign factor. For the sake of clarity, we describe each design in detail.

\mysection{(D1) Dilated convolution} \label{sec:dc}
To enlarge the receptive field without loss of resolution, Yu \etal \cite{27} propose a  dilated convolution as an alternative to stacking consecutive pooling layers. Their experiments on semantic segmentation tasks demonstrate the effectiveness of the convolution for performance improvement. We suppose that dilation also helps improve the performance of GNNs on graph learning tasks. Therefore, we introduce dilated convolution to GNNs. There are many possible ways to construct a dilated neighborhood. We find a set of neighboring nodes at different distances by applying the power of adjacency matrix (with or without self-loops) to construct geometric dilated neighborhoods. We define the geometric dilated neighborhoods by taking the adjacency matrix with self-loops as an example.
\begin{equation} \label{eq8}
    f_{d}^{(m)}=\{\hat{A}^{d_{m}+q_{0}}, \hat{A}^{q_{m}+d_{m}+q_{0}}, \cdots, \hat{A}^{(k-1)q_{m}+d_{m}+q_{0}}\} \,,
\end{equation}
where $d_{m}+q_{0}, q_{m}+d_{m}+q_{0}, \cdots, (k-1)q_{m}+d_{m}+q_{0}$ are a set of dilation factors and $\hat{A}^{(k-1)q_{m}+d_{m}+q_{0}}$ denotes the $(k-1)q_{m}+d_{m}+q_{0}$ power of $\hat{A}$. \figLabel \ref{fig:2} provides two ways to construct our geometric dilated neighborhoods. With the dilated neighborhoods, we define our geometric dilated convolution.
\begin{equation} \label{eq9}
\!H_{d}^{\!(m)\!}\!=\!\text{\!AGGR}\!\{\hat{A}^{\! d_{m}\!+\!q_{0}\!},\!\hat{A}^{\!q_{m}\!+\!d_{m}\!+\!q_{0}\!},\!\cdots\!,\!\hat{A}^{\!(k-1)\!q_{m}\!+\!d_{m}\!+\!q_{0}\!}\}\!X\!W ,
\end{equation}
where AGGR is an aggregation function that aggregates representations from the dilated neighborhoods in sum way, and $W$ denotes a learned weight matrix.

\begin{figure}[!htb]
    \centering
    \includegraphics[page=4, width=0.8\columnwidth]{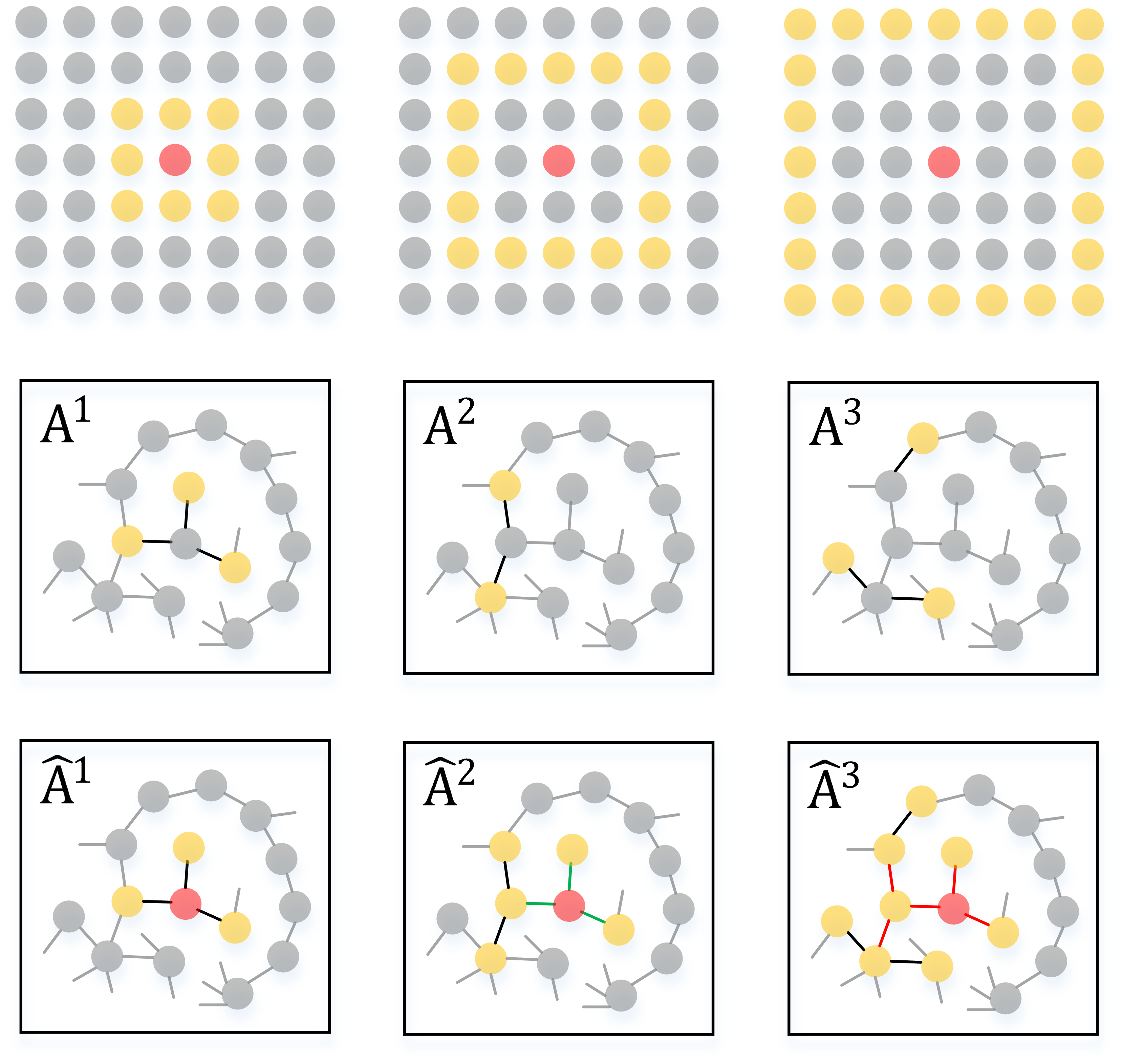}
    \caption{\textbf{Geometric dilated neighborhoods}. Visualization of dilated neighborhoods on a 2D image and on a graph. (\emph{top}) 2D neighborhoods with kernel size 3 and dilation factor 1, 2, 3 (left to right). (\emph{middle}) Geometric dilated neighborhoods with dilation factor 1, 2, 3 (left to right). (\emph{bottom}) Geometric dilated neighborhoods with self-loops and dilation factor 1, 2, 3 (left to right).}
\label{fig:2}
\end{figure}

\mysection{(D2) Multi-channel learning} \label{sec:mc}
In \eqLabel \eqref{eq9}, the single-channel convolution can effectively learn the interaction of neighboring nodes with different dilation factors. To further improve the interaction, we introduce multi-channel learning to our geometric dilated convolution.
\begin{equation} \label{eq10}
    H_{d}=aggregate(H_{d}^{(1)}, \cdots, H_{d}^{(m)}) \,,
\end{equation}
where $H_{d}^{(m)}$ is the geometric dilated convolution of the $m$-th channel, and $aggregate$ is an aggregation function that brings no extra parameters and is described in detail in \secLabel \ref{sec:gpcon}.

\mysection{(D3) Self-attention score} \label{sec:sa}
To analyze the contribution of the features of self-node to performance improvement compared to the features of its neighboring nodes, we introduce a self-attention score $\alpha\in R$ into the proposed model.
\begin{equation} \label{eq11}
H_{0d}^{(m)}=\text{AGGR}(\alpha\hat{A}^{0}XW, H_{d}^{(m)}) \,.
\end{equation}
We set the hyperparameter $\alpha$, which will be discussed in \secLabel \ref{sec:ablation}, according to different types and scales of networks. By setting $\alpha>0$, the final representation of each node consists of at least a fraction of $X$.

\mysection{(D4) Sign factor} \label{sec:sign}
Many GNNs such as APPNP and SGC invariably suppress the high-frequency signals \cite{48}. Thus, they may perform worse on heterophily graphs. To address the problem, we propose a sign factor $\beta$ $(\beta=1 \text{or} -1)$ to adapt to both homophily and heterophily graphs.
\begin{equation} \label{eq12}
H^{(m)}=\text{AGGR}(\alpha\hat{A}^{0}XW, \beta H_{d}^{(m)}) \,,
\end{equation}
where $\beta$ is a key another hyperparameter that controls the type of filter along with $\alpha$. We set $\beta=1$ and $\beta=-1$ for homophily and heterophily networks in most cases, respectively. If the effect of $W$ is ignored, the filter with $\beta=1$ mainly considers the low-frequency components while the filter with $\beta=-1$ pays more attention to the high-frequency components (see Theorem \ref{thm:3}).

\mysection{Summary of designs} \label{sec:design_sum}
To sum up, we believe that the first two designs are key techniques for boosting the performance of learning graph representations, and the latter two designs are effective strategies for fitting different types and scales of networks.

\subsection{The Overall Architecture} \label{sec:arch}
Current GNNs suffer from at least one of the fundamental limitations including over-smoothing, over-fitting, difficulty training, and strong homophily assumption. SGC, as an extremely efficient one-layer model, addresses the problems of over-fitting and difficulty training, by removing the nonlinearities between GCN layers and precomputing the fixed feature extraction \cite{23}. However, the analysis in \secLabel \ref{sec:sgc} shows that SGC fails to tackle the over-smoothing and is distant from optimal with heterophily graphs. To tackle the limitations, we propose a new simple and efficient one-layer model called GPNet. In GPNet, we provide the four key designs to fit a distribution under optimal complexity and accuracy for different types and scales of networks. With the designs, we design a simple convolution model with multi-channel geometric polynomials to learn graph representations. In output layer, we apply a softmax classifier to classify each node.

\subsection{Simple Convolution with Multi-channel Geometric Polynomials} \label{sec:gpcon}
Our simple convolution with multi-channel geometric polynomials to update node representations in three stages, \ie geometric adjacency matrix, geometric feature propagation, and geometric linear transformation.

\mysection{Geometric adjacency matrix} \label{sec:gam}
In GCNs, we use $S_{sym}$ to obtain the features of a node and its first-order neighboring nodes. However, $S_{sym}$ fails to obtain the features of high-order neighboring nodes. In SGC, we apply the $k$ power of $S_{sym}$ (${S_{sym}}^{k}$) in an one-layer model to capture the features of neighboring nodes that are $k$-hops away. As shown in \secLabel \ref{sec:sgc}, SGC has difficulty in capturing the features of neighboring nodes at long distances and fails to fit well a distribution with heterophily. To tackle the problems, we combine various geometric powers of the normalized adjacency matrix (with or without self-loops) based on different dilation factors into a single geometric adjacency matrix with six stages: (S1) features obtaining of own node and its neighboring nodes; (S2) features summation of neighboring nodes; (S3) features self-attention of own node; (S4) features sign of neighboring nodes; (S5) features summation of S3 and S4; and (S6) features aggregation. In the geometric adjacency matrix, we take the normalized adjacency matrix with self-loops $\hat{A}$ as an example to facilitate the analysis of the designed model.

\textbf{(S1) Features obtaining of own node and its neighboring nodes.}  With different dilation factors, we obtain the features of own node and its neighboring nodes at various distances via a series of functions of $\hat{A}$.
\begin{equation} \label{eq13}
f(\hat{A})=(f_{1}(\hat{A}), f_{2}(\hat{A}), \cdots, f_{m}(\hat{A})) \,,
\end{equation}
where $f_{m}(\hat{A})$ is the geometric polynomial of $\hat{A}$ for $i\neq 0$, which is defined as:
\begin{equation} \label{eq14}
f_{m}(\hat{A})=
\begin{cases}
\hat{A}^{i},& \text{$i=0$}\\
\hat{A}^{(i-1)q_{m}+d_{m}+q_{0}},& \text{$i=1, 2, \cdots, k$}
\end{cases}
\end{equation}
where $m$ and $k$ are the number of channels and the number of terms, respectively. Here $q_{0}\geq 0, q_{m}\geq 1$, and $d_{m}\geq 0$ are the first item coefficient, common ratios, and neighborhood coefficients of $f_{m}(\hat{A})$, respectively. In \eqLabel \eqref{eq14}, we can fix the value of $m$ to obtain the geometric adjacency matrices of different dilation factors. For example when $m=1$, we can get a set of adjacency matrices with dilation factor $d_{1}+q_{0}, q_{1}+d_{1}+q_{0}, \cdots, (k-1)q_{1}+d_{1}+q_{0}$.

\textbf{(S2) Features summation of neighboring nodes.} After the stage S1, we sum the features of neighboring nodes using the geometric polynomials of adjacency matrices with various dilation factors.
\begin{small}
\begin{equation} \label{eq15}
\begin{split}
f_{1}(\hat{A})_{sum}&=\hat{A}^{d_{1}+q_{0}}+\hat{A}^{q_{1}+d_{1}+q_{0}}+\cdots+\hat{A}^{(k-1)q_{1}+d_{1}+q_{0}},\\
f_{2}(\hat{A})_{sum}&=\hat{A}^{d_{2}+q_{0}}+\hat{A}^{q_{2}+d_{2}+q_{0}}+\cdots+\hat{A}^{(k-1)q_{2}+d_{2}+q_{0}},\\
\vdots \\
f_{m}(\hat{A})_{sum}&=\hat{A}^{d_{m}+q_{0}}+\hat{A}^{q_{m}+d_{m}+q_{0}}+\cdots+\hat{A}^{(k-1)q_{m}+d_{m}+q_{0}}.
\end{split}
\end{equation}
\end{small}

\textbf{(S3) Features self-attention of own node.} We apply a self-attention score $\alpha \in R$ to adjust the contribution of node's own features.
\begin{equation} \label{eq16}
S^{0}=\alpha\hat{A}^{0} \,.
\end{equation}

\textbf{(S4) Features sign of neighboring nodes.} We propose a sign factor $\beta$ $(\beta=1 \text{or} -1)$ that is a key design of fitting a distribution on both homophily and heterophily graphs. By applying $\beta$ to \eqLabel \eqref{eq16}, we have:
\begin{small}
\begin{equation} \label{eq17}
\begin{split}
\beta f_{1}(\hat{A})_{sum}&=\beta\left(\hat{A}^{d_{1}+q_{0}}+\hat{A}^{q_{1}+d_{1}+q_{0}}+\cdots+\hat{A}^{(k-1)q_{1}+d_{1}+q_{0}}\right),\\
\beta f_{2}(\hat{A})_{sum}&=\beta(\hat{A}^{d_{2}+q_{0}}+\hat{A}^{q_{2}+d_{2}+q_{0}}+\cdots+\hat{A}^{(k-1)q_{2}+d_{2}+q_{0}}),\\
\vdots \\
\beta f_{m}(\hat{A})_{sum}&=\beta(\hat{A}^{d_{m}+q_{0}}\!+\!\hat{A}^{q_{m}+d_{m}+q_{0}}\!+\!\cdots\!+\!\hat{A}^{(k-1)q_{m}+d_{m}+q_{0}}).
\end{split}
\end{equation}
\end{small}

\textbf{(S5) Features summation of S3 and S4.} After the stages of S3 and S4, we sum the features of S3 and S4 in \eqLabel \eqref{eq18}.
\begin{small}
\begin{equation} \label{eq18}
\begin{split}
g_{1}(\hat{A})_{sum} &= S^{0}+\beta f_{1}(\hat{A})_{sum} \\
&= \alpha\hat{A}^{0}\!+\!\beta(\hat{A}^{d_{1}+q_{0}}\!+\!\hat{A}^{q_{1}+d_{1}+q_{0}}\!+\!\!\cdots\!\!+\!\hat{A}^{(k-1)q_{1}+d_{1}+q_{0}}),\\
g_{2}(\hat{A})_{sum} &= S^{0}+\beta f_{2}(\hat{A})_{sum} \\
&= \alpha\hat{A}^{0}\!+\!\beta(\hat{A}^{d_{2}+q_{0}}\!+\!\hat{A}^{q_{2}+d_{2}+q_{0}}\!+\!\!\cdots\!\!+\!\hat{A}^{(k-1)q_{2}+d_{2}+q_{0}}),\\
\vdots \\
g_{m}(\hat{A})_{sum} &= S^{0}+\beta f_{m}(\hat{A})_{sum} \\
&= \!\alpha\hat{A}^{0}\!\!+\!\beta(\hat{A}^{d_{m}\!+q_{0}}\!\!+\!\hat{A}^{q_{m}\!+d_{m}\!+q_{0}}\!\!+\!\!\cdots\!\!+\!\hat{A}^{(k-1)q_{m}\!+d_{m}\!+q_{0}}).\\
\end{split}
\end{equation}
\end{small}

\textbf{(S6) Features aggregation.} Finally, we aggregate the features of $g_{1}(\hat{A})_{sum}, g_{2}(\hat{A})_{sum}, \cdots, g_{m}(\hat{A})_{sum}$.
\begin{equation} \label{eq19}
\hat{S}_{adj}=aggregate(g_{1}(\hat{A})_{sum}, g_{2}(\hat{A})_{sum}, \cdots, g_{m}(\hat{A})_{sum}) \,,
\end{equation}
where $aggregate$ is an aggregation function. Instead of  the aggregation schemes of increasing parameters as in concatenation and LSTM-attention \cite{20}, we explore four aggregation approaches without additional parameters, including Max-FP, Min-FP, Avg-FP, and Sum-FP. Our aggregation approaches are all element-wise operations, which are defined as:
\begin{equation} \label{eq20}
\begin{split}
\hat{S}_{adj}&=\text{Max-FP}(g_{1}(\hat{A})_{sum}, g_{2}(\hat{A})_{sum}, \cdots, g_{m}(\hat{A})_{sum}) \\
&=max(g_{1}(\hat{A})_{sum}, g_{2}(\hat{A})_{sum}, \cdots, g_{m}(\hat{A})_{sum}) \,,
\end{split}
\end{equation}
\begin{equation} \label{eq21}
\begin{split}
\hat{S}_{adj}&=\text{Min-FP}(g_{1}(\hat{A})_{sum}, g_{2}(\hat{A})_{sum}, \cdots, g_{m}(\hat{A})_{sum}) \\
&=min(g_{1}(\hat{A})_{sum}, g_{2}(\hat{A})_{sum}, \cdots, g_{m}(\hat{A})_{sum}) \,,
\end{split}
\end{equation}
\begin{equation} \label{eq22}
\begin{split}
\hat{S}_{adj}&=\text{Avg-FP}(g_{1}(\hat{A})_{sum}, g_{2}(\hat{A})_{sum}, \cdots, g_{m}(\hat{A})_{sum}) \\
&=avg(g_{1}(\hat{A})_{sum}, g_{2}(\hat{A})_{sum}, \cdots, g_{m}(\hat{A})_{sum}) \\
&=\frac{1}{m}\sum\nolimits_{r=1}^{m} g_{r}(\hat{A})_{sum} \,,
\end{split}
\end{equation}
\begin{equation} \label{eq23}
\begin{split}
\hat{S}_{adj}&=\text{Sum-FP}(g_{1}(\hat{A})_{sum}, g_{2}(\hat{A})_{sum}, \cdots, g_{m}(\hat{A})_{sum}) \\
&=sum(g_{1}(\hat{A})_{sum}, g_{2}(\hat{A})_{sum}, \cdots, g_{m}(\hat{A})_{sum}) \\
&=\sum\nolimits_{r=1}^{m} g_{r}(\hat{A})_{sum} \,.
\end{split}
\end{equation}
\begin{proposition}\label{prop:1}
    Geometric adjacency matrix is an operation of permutation invariance.
\end{proposition}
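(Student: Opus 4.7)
The plan is to interpret ``permutation invariance'' in the standard sense used throughout the GNN literature: the geometric adjacency matrix operator is equivariant under node relabelings, meaning that for every permutation matrix $P \in \{0,1\}^{n \times n}$, relabeling the nodes and then building $\hat{S}_{adj}$ yields the same matrix as building $\hat{S}_{adj}$ first and then relabeling. Concretely, writing $\hat{S}_{adj}(\hat{A})$ to emphasize the dependence on the input, I would aim to prove
\begin{equation*}
\hat{S}_{adj}(P\hat{A}P^{T}) = P\,\hat{S}_{adj}(\hat{A})\,P^{T}.
\end{equation*}

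First, I would establish the base transformation rule. Under relabeling by $P$, the raw adjacency matrix transforms as $A \mapsto PAP^{T}$, and since the identity commutes with every permutation ($PIP^{T}=I$), the self-loop version satisfies $\tilde{A} \mapsto P\tilde{A}P^{T}$. The degree matrix $\tilde{D}$ is diagonal with entries equal to row sums of $\tilde{A}$, so it transforms as $\tilde{D} \mapsto P\tilde{D}P^{T}$; because $P$ is orthogonal and $\tilde{D}$ is a nonnegative diagonal matrix, the same relation holds for $\tilde{D}^{-1/2}$. Combining these, the normalized matrix with self-loops obeys $\hat{A} \mapsto P\hat{A}P^{T}$.

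Second, I would lift this to all the building blocks used in \eqLabel \eqref{eq13}--\eqLabel \eqref{eq18}. Using $P^{T}P = I$, one obtains by induction
\begin{equation*}
(P\hat{A}P^{T})^{j} = P\hat{A}^{j}P^{T} \quad \text{for every integer } j \geq 0,
\end{equation*}
so every term $\hat{A}^{(i-1)q_{m}+d_{m}+q_{0}}$ appearing in $f_{m}(\hat{A})$, as well as $\hat{A}^{0}=I$ used in the self-attention term $S^{0}$, conjugates covariantly by $P$. Since conjugation by $P$ is linear, it commutes with the scalar coefficients $\alpha$ and $\beta$ and with the summations defining $f_{m}(\hat{A})_{sum}$ and $g_{m}(\hat{A})_{sum}$, giving $g_{m}(P\hat{A}P^{T})_{sum} = P g_{m}(\hat{A})_{sum} P^{T}$ for each channel $m$.

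The last and only nontrivial step is handling the aggregation in \eqLabel \eqref{eq19}--\eqLabel \eqref{eq23}, which is where I expect the main (still mild) obstacle to lie: I must check that every one of the four element-wise aggregators Max-FP, Min-FP, Avg-FP, Sum-FP commutes with conjugation by $P$. Because $P$ is a $0/1$ permutation matrix, the map $M \mapsto PMP^{T}$ merely relocates entry $(k,l)$ of $M$ to a new position $(\sigma(k),\sigma(l))$ without altering its value. An element-wise operator $\phi$ satisfies $[\phi(M_{1},\dots,M_{m})]_{kl} = \phi([M_{1}]_{kl},\dots,[M_{m}]_{kl})$, so applying $\phi$ after relabeling and relabeling after applying $\phi$ produce the same value at each position. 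Concretely,
\begin{equation*}
\bigl[P\phi(M_{1},\dots,M_{m})P^{T}\bigr]_{\sigma(k)\sigma(l)} = \phi\bigl([PM_{1}P^{T}]_{\sigma(k)\sigma(l)},\dots,[PM_{m}P^{T}]_{\sigma(k)\sigma(l)}\bigr),
\end{equation*}
which establishes $\phi(PM_{1}P^{T},\dots,PM_{m}P^{T}) = P\,\phi(M_{1},\dots,M_{m})\,P^{T}$. Chaining this with the previous steps yields the claim. I would close by remarking that no property of $\hat{A}$ beyond the structural transformation rule is used, so the identical argument applies verbatim when $\hat{A}$ is replaced by the self-loop-free normalized adjacency matrix.
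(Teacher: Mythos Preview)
Your argument is mathematically sound, but you have proved a different statement from the one the paper has in mind. You interpret ``permutation invariance'' as equivariance under relabeling of the \emph{nodes} of the graph, i.e.\ $\hat{S}_{adj}(P\hat{A}P^{T}) = P\,\hat{S}_{adj}(\hat{A})\,P^{T}$ for any permutation matrix $P$. The paper, however, means invariance under permuting the \emph{channel arguments} $g_{1}(\hat{A})_{sum},\dots,g_{m}(\hat{A})_{sum}$ fed into the aggregator: its entire proof consists of observing that Max-FP, Min-FP, Avg-FP, and Sum-FP are symmetric functions of their $m$ matrix inputs, so reordering the channels leaves $\hat{S}_{adj}$ unchanged.

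What each approach buys: the paper's argument is a one-line observation (element-wise $\max$, $\min$, average, and sum are symmetric in their arguments) and is the literal content of the proposition as the authors intend it. Your argument establishes the stronger and, in the GNN literature, more standard property of node-permutation equivariance; it requires the extra but routine checks that powers, linear combinations, and element-wise aggregators all commute with conjugation by a permutation matrix. Your result subsumes nothing about channel-order symmetry and vice versa---they are orthogonal notions of ``permutation''---so if you want to match the paper you should instead simply note that each of the four aggregators in Equations~\eqref{eq20}--\eqref{eq23} is invariant under any reordering of its $m$ inputs.
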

\begin{proof}
    In Max-FP, Min-FP, Avg-FP, and Sum-FP, we have
    \begin{align*}
        \text{Max-FP}(g_{1}(\hat{A})_{sum}, g_{2}(\hat{A})_{sum}, \cdots, g_{m}(\hat{A})_{sum})=\\ \text{Max-FP}(g_{m}(\hat{A})_{sum}, g_{m-1}(\hat{A})_{sum}, \cdots, g_{1}(\hat{A})_{sum}),
    \end{align*}
    \begin{align*}
        \text{Min-FP}(g_{1}(\hat{A})_{sum}, g_{2}(\hat{A})_{sum}, \cdots, g_{m}(\hat{A})_{sum})=\\ \text{Min-FP}(g_{m}(\hat{A})_{sum}, g_{m-1}(\hat{A})_{sum}, \cdots, g_{1}(\hat{A})_{sum}),
    \end{align*}
    \begin{align*}
        \text{Avg-FP}(g_{1}(\hat{A})_{sum}, g_{2}(\hat{A})_{sum}, \cdots, g_{m}(\hat{A})_{sum})=\\ \text{Avg-FP}(g_{m}(\hat{A})_{sum}, g_{m-1}(\hat{A})_{sum}, \cdots, g_{1}(\hat{A})_{sum}),
    \end{align*}
    \begin{align*}
        \text{Sum-FP}(g_{1}(\hat{A})_{sum}, g_{2}(\hat{A})_{sum}, \cdots, g_{m}(\hat{A})_{sum})=\\ \text{Sum-FP}(g_{m}(\hat{A})_{sum}, g_{m-1}(\hat{A})_{sum}, \cdots, g_{1}(\hat{A})_{sum}),
    \end{align*}
    thus finishing the proof.
\end{proof}
\begin{proposition}\label{prop:2}
    Geometric adjacency matrix is an operation of preserving graph topology.
\end{proposition}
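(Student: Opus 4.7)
The plan is to read ``preserving graph topology'' as the statement that every nonzero entry of $\hat{S}_{adj}$ is supported by an actual walk in $\mathcal{G}$, so that the construction introduces no spurious connections and depends only on the combinatorial structure of the graph together with the chosen hyperparameters $(q_{0},q_{m},d_{m},\alpha,\beta)$. The argument proceeds in three steps.

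First I would invoke the standard combinatorial interpretation of matrix powers: for any integer $p \geq 0$, $(\hat{A}^{p})_{ij} \neq 0$ if and only if there is a walk of length $p$ between vertices $v_{i}$ and $v_{j}$ in the self-looped graph associated with $\hat{A}$. This directly ties each term $\hat{A}^{(i-1)q_{m}+d_{m}+q_{0}}$ appearing in \eqLabel \eqref{eq14} to a concrete topological object, namely the set of walks of the corresponding length.

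Second, I would propagate this support structure through \eqLabel \eqref{eq15}--\eqref{eq18}. Scalar multiplication by $\alpha$ or $\beta$ with $\alpha,\beta \neq 0$ preserves the nonzero pattern, and matrix summation takes the union of supports. Consequently the nonzero pattern of each $g_{r}(\hat{A})_{sum}$ is contained in the union of the supports of $\alpha \hat{A}^{0}$ and of the powers $\hat{A}^{d_{r}+q_{0}},\ldots,\hat{A}^{(k-1)q_{r}+d_{r}+q_{0}}$, each of which encodes walks in $\mathcal{G}$ and is therefore purely topological.

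Third, I would show that the aggregations Max-FP, Min-FP, Avg-FP, and Sum-FP defined in \eqLabel \eqref{eq20}--\eqref{eq23} act element-wise on $g_{1}(\hat{A})_{sum},\ldots,g_{m}(\hat{A})_{sum}$, so the support of $\hat{S}_{adj}$ is contained in the union of the supports of the $g_{r}(\hat{A})_{sum}$. Hence every nonzero entry of $\hat{S}_{adj}$ corresponds to a walk in $\mathcal{G}$; equivalently, for any vertex-relabeling permutation matrix $P$, the geometric adjacency matrix of the permuted graph equals $P\hat{S}_{adj}P^{T}$, confirming that the construction respects graph isomorphisms. The main obstacle I anticipate lies in the nonlinear aggregators Max-FP and Min-FP, where sign-dependent cancellation between $\alpha\hat{A}^{0}$ and the $\beta$-weighted powers might appear to threaten the argument; however, since Max/Min at position $(i,j)$ merely selects among values already present in the $g_{r}(\hat{A})_{sum}$, the support-theoretic claim survives without additional hypotheses.
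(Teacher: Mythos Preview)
Your argument is correct for the interpretation you adopt, but it is not the route the paper takes. The paper's own proof reads ``preserving graph topology'' in a much weaker sense: it simply observes that $\hat{A}\in\mathbb{R}^{n\times n}$, that each $g_{r}(\hat{A})_{sum}$ is therefore also in $\mathbb{R}^{n\times n}$, and that the four aggregators in \eqLabel \eqref{eq20}--\eqref{eq23} are element-wise, so $\hat{S}_{adj}\in\mathbb{R}^{n\times n}$ with ``the spatial location of $\hat{S}_{adj}$ the same as that of $\hat{A}$.'' In other words, the paper only checks that the output is still an $n\times n$ array indexed by the original vertex set; it does not analyze supports, walks, or permutation equivariance at all.

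Your version proves a genuinely stronger and more informative statement. By tracking the support through matrix powers, scalar multiples, sums, and element-wise aggregation, you establish that every nonzero entry of $\hat{S}_{adj}$ is witnessed by an actual walk in the self-looped graph, and you additionally obtain equivariance under vertex relabeling. This is arguably the natural meaning of ``preserving graph topology'' and subsumes the paper's dimension-preservation claim as a trivial corollary. The trade-off is that the paper's proof is a two-line observation requiring no combinatorics, whereas yours needs the walk interpretation of $\hat{A}^{p}$ (which is safe here because $\hat{A}$ has nonnegative entries, so no cancellation occurs in powers) and the support-containment bookkeeping under Max-FP and Min-FP that you correctly flag and resolve.
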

\begin{proof}
    Let $\hat{A}\in R^{n\times n}$, then
    \begin{small}
    \begin{align*}
    \begin{split}
        g&_{1}(\hat{A})_{sum}\!=\!\alpha \hat{A}^{0}\!\!+\!\beta(\hat{A}^{d_{1}\!+\!q_{0}}\!\!+\!\hat{A}^{q_{1}\!+\!d_{1}\!+\!q_{0}}\!\!+\!\!\cdots\!\!+\!\hat{A}^{(k-1)q_{1}\!+\!d_{1}\!+\!q_{0}})\!\!\in\!\! R^{n\!\times\! n},\\
        g&_{2}(\hat{A})_{sum}\!=\!\alpha \hat{A}^{0}\!\!+\!\beta(\hat{A}^{d_{2}\!+\!q_{0}}\!\!+\!\hat{A}^{q_{2}\!+\!d_{2}\!+\!q_{0}}\!\!+\!\!\cdots\!\!+\!\hat{A}^{(k-1)q_{2}\!+\!d_{2}\!+\!q_{0}})\!\!\in\!\! R^{n\!\times\! n},\\
        \vdots \\
        g&_{m}\!(\!\hat{A})_{\!sum}\!\!=\!\alpha \hat{A}^{0}\!\!+\!\!\beta(\!\hat{A}^{d_{m}\!+\!q_{0}}\!\!+\!\!\hat{A}^{q_{m}\!+\!d_{m}\!+\!q_{0}}\!\!+\!\!\cdots\!\!+\!\!\hat{A}^{(\!k-1\!)q_{m}\!+\!d_{m}\!+\!q_{0}}\!)\!\!\in\!\! R^{n\!\times\! n},\\
    \end{split}
    \end{align*}
    and $\hat{S}_{adj}\!=\!aggregate(g_{1}(\hat{A})_{sum}, g_{2}(\hat{A})_{sum}, \!\!\cdots\!\!, g_{m}(\hat{A})_{sum})\!\!\in\!\! R^{n\times n}$. Equations \eqref{eq20}-\eqref{eq23} show that the aggregation schemes of Max-FP, Min-FP, Avg-FP, and Sum-FP are element-wise operations. Therefore, the spatial location of $\hat{S}_{adj}$ is the same as that of $\hat{A}$, which finishes the proof.
    \end{small}
\end{proof}

\mysection{Geometric feature propagation} \label{sec:gfp}
Geometric feature propagation is what distinguishes our model from many GNNs such as GCN and SGC. The node feature matrix $X$ is averaged with the features of neighboring nodes constructed by the geometric adjacency matrix.
\begin{equation} \label{eq24}
\bar{H}=\hat{S}_{adj}X \,.
\end{equation}
This step encourages similar predictions among locally connected nodes on homophily graphs and improves the ability of distinguishing the features of nodes in different clusters on heterophily graphs. In addition, we can think of $\bar{H}$ as a feature pre-processing step before training due to the parameterless computations of $\bar{H}$.

\mysection{Geometric linear transformation} \label{sec:glt}
After the geometric feature propagation, we implement a geometric linear transformation by $\bar{H}W$.
\begin{equation} \label{eq25}
H=\bar{H}W \,.
\end{equation}

Based on Equations \eqref{eq24}-\eqref{eq25}, we conclude that the proposed simple convolution with multi-channel geometric polynomials takes the following form.
\begin{equation} \label{eq26}
H=\hat{S}_{adj}XW \,.
\end{equation}
\begin{algorithm}[tb]
\caption{Simple convolution with multi-channel geometric polynomials}
\label{alg:algorithm}
\textbf{Inputs}: a normalized adjacency matrix with self-loops $\hat{A}$, an initial feature matrix $X$, and a learned weight matrix $W$.
\begin{algorithmic}[1] 
\STATE \textbf{for} $i=0$ \textbf{do}
\STATE \quad $f_{m}^{(i)}(\hat{A})=\hat{A}^{i}$.
\STATE \textbf{end for}
\STATE \textbf{for} $i=1$ \textbf{to} $k$ \textbf{do}
\STATE \quad $f_{m}^{(i)}(\hat{A})=\hat{A}^{(i-1)q_{m}+d_{m}+q_{0}}$.
\STATE \textbf{end for}
\STATE Calculate geometric adjacency matrix $\hat{S}_{adj}$ via the six stages of (S1)-(S6).
\STATE $\bar{H} = \hat{S}_{adj}X$.
\STATE $H = \bar{H}W=\hat{S}_{adj}XW$.
\STATE \textbf{return} $H$
\end{algorithmic}
\end{algorithm}

Our algorithm (with self-loops) is summarized in Algorithm~\ref{alg:algorithm}. Our convolution model aims to fit a distribution on both homophily and heterophily graphs via four designs of (D1)-(D4). In GCN and SGC, the convolution acts as a low-pass filter, which is suitable for homophily graphs but may not work well with heterophily. However, our convolution adapts to different types of networks by utilizing signals of different frequencies.

\subsection{Output Layer} \label{sec:out}
In output layer, we follow Kipf and Welling \cite{1} to predict the label of nodes using a $\text{softmax}$ classifier and a cross entropy loss. The output prediction $Y_{\text{GPNet}}$ is:
\begin{equation} \label{eq27}
Y_{\text{GPNet}} = \text{softmax}(H)=\text{softmax}(\hat{S}_{adj}XW) \,.
\end{equation}

\section{Theoretical Analysis}
\label{sec:analysis}

\subsection{Analysis of SGC}
\label{sec:sgc}
SGC, as a simplified linear model, has the same increased receptive field of $k$-layer GNN, which can aggregate the information of neighbor nodes that are $k$-hops away. We increase the receptive field and aggregate the features of neighbor nodes at more distant distances by taking a larger value of $k$. However, taking a larger value of $k (k\geq 3)$ makes the output features of nodes in the same cluster and different clusters similar and hurts the classification task \cite{26}. This shows that SGC with large $k$ value may suffer from over-smoothing. We propose Theorem \ref{thm:1} for further proof.
\begin{theorem}\label{thm:1}
    For an undirected graph, SGC corresponds to a $k$-step random walk considering self-loops. As $k$ tends to infinity, ${S_{sym}}^{k}$  converges to a stationary distribution $\pi=\frac{\tilde{D}}{2e+n}$, where $e$ is the number of edges in the graph.
\end{theorem}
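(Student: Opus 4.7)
The plan is to recognize $S_{sym} = \tilde{D}^{-1/2}\tilde{A}\tilde{D}^{-1/2}$ as a symmetric similarity of the random-walk transition matrix $P = \tilde{D}^{-1}\tilde{A}$ on the augmented graph: $S_{sym} = \tilde{D}^{1/2} P \tilde{D}^{-1/2}$, and by induction $S_{sym}^{k} = \tilde{D}^{1/2} P^{k} \tilde{D}^{-1/2}$. This immediately proves the first assertion of the theorem, since each application of $P$ is one step of a random walk on $\mathcal{G}$ with self-loops. The problem of analyzing $S_{sym}^{k}$ thus reduces to controlling the $k$-step distribution of a reversible Markov chain, which is a classical mixing question.

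I would next write down and verify the candidate stationary distribution $\pi_i = \tilde{d}_i/(2e+n)$. The normalization follows from $\sum_i \tilde{d}_i = \sum_i (d_i + 1) = 2e + n$, since each edge of $\mathcal{G}$ contributes $2$ to $\sum_i d_i$ and each added self-loop contributes $1$. Stationarity then follows from the detailed balance identity $\pi_i P_{ij} = \tilde{A}_{ij}/(2e+n) = \pi_j P_{ji}$, which uses only the symmetry of $\tilde{A}$. This is the \emph{reversibility} of the walk and also produces the symmetric spectrum of $S_{sym}$.

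The limit $P^{k}\to \mathbf{1}\pi^{T}$ then follows from Perron--Frobenius / the spectral theorem once uniqueness and aperiodicity are in hand. Uniqueness of $\pi$ is equivalent to $\mathcal{G}$ being connected, which is the standing assumption. Aperiodicity is automatic because every vertex carries a self-loop in $\tilde{A}$, so the gcd of return times at every state is $1$. Concretely, since $S_{sym}$ is symmetric with $\|S_{sym}\|_{2}\le 1$ and $\tilde{D}^{1/2}\mathbf{1}$ is a $+1$-eigenvector, the spectral theorem gives $S_{sym}^{k}\to \Pi_{1}$, the orthogonal projector onto that eigenline. Conjugating back by $\tilde{D}^{\pm 1/2}$ yields
\begin{equation*}
S_{sym}^{k}\;\longrightarrow\;\tilde{D}^{1/2}\,\mathbf{1}\pi^{T}\,\tilde{D}^{-1/2}, \qquad (S_{sym}^{k})_{ij}\to \frac{\sqrt{\tilde{d}_{i}\tilde{d}_{j}}}{2e+n},
\end{equation*}
whose row- and column-structure is exactly the stationary distribution $\pi = \tilde{D}\mathbf{1}/(2e+n)$ claimed in the theorem.

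The main obstacle I expect is the clean verification that no eigenvalue of $S_{sym}$ equals $-1$ for a connected graph with self-loops; without that step, $S_{sym}^{k}$ would oscillate and not converge (exactly what happens for bipartite $A$ without the $I$ term). This is the precise place where the addition of self-loops via $\tilde{A}=A+I$ is indispensable, and I would establish it either by the standard aperiodicity argument for a lazy chain, or directly by noting that $S_{sym} u = -u$ combined with $\tilde{A}_{ii}\ge 1$ forces $u_{i}=0$ entry by entry. Once that gap is secured, the remaining steps are routine spectral/Markov-chain bookkeeping.
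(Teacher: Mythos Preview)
Your proposal is correct and follows the same core idea as the paper: conjugate $S_{sym}$ to a random-walk transition matrix and read off the stationary distribution of the augmented graph. The paper's proof in fact stops after establishing the similarity $S_{sym}^{k}=\tilde{D}^{-1/2}\tilde{P}^{k}\tilde{D}^{1/2}$ (with $\tilde{P}=\tilde{A}\tilde{D}^{-1}$, the transpose of your $P$) and simply asserts convergence to $\pi=\tilde{D}/(2e+n)$; your treatment of the normalization $\sum_i\tilde d_i=2e+n$, detailed balance, aperiodicity via self-loops, and the exclusion of the eigenvalue $-1$ supplies the justification the paper leaves implicit.
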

\begin{proof}
In \eqLabel \eqref{eq7}, we have ${S_{sym}}^{k}X\theta=(\tilde{D}^{-\frac{1}{2}}\tilde{A}\tilde{D}^{-\frac{1}{2}})^{k}X\theta$. Let transition matrix with self-loops $\tilde{P}=\tilde{A}\tilde{D}^{-1}$, then
\begin{equation} \label{eq28}
\begin{split}
(\tilde{D}^{-\frac{1}{2}}\tilde{A}\tilde{D}^{-\frac{1}{2}})^{k} &= \tilde{D}^{-\frac{1}{2}}\tilde{A}\tilde{D}^{-\frac{1}{2}} \cdot \tilde{D}^{-\frac{1}{2}}\tilde{A}\tilde{D}^{-\frac{1}{2}} \cdots \tilde{D}^{-\frac{1}{2}}\tilde{A}\tilde{D}^{-\frac{1}{2}} \\
&= \tilde{D}^{-\frac{1}{2}}(\tilde{A}\tilde{D}^{-1})(\tilde{A}\tilde{D}^{-1}) \cdots (\tilde{A}\tilde{D}^{-1}) \cdot \tilde{A}\tilde{D}^{-\frac{1}{2}} \\
&= \tilde{D}^{-\frac{1}{2}}(\tilde{A}\tilde{D}^{-1})^{k-1} \cdot \tilde{A}\tilde{D}^{-\frac{1}{2}} \\
&=\tilde{D}^{-\frac{1}{2}}(\tilde{A}\tilde{D}^{-1})^{k-1} \cdot \tilde{A}\tilde{D}^{-\frac{1}{2}} \cdot \tilde{D}^{-\frac{1}{2}} \cdot \tilde{D}^{\frac{1}{2}} \\
&= \tilde{D}^{-\frac{1}{2}}(\tilde{A}\tilde{D}^{-1})^{k-1} \cdot (\tilde{A}\tilde{D}^{-1}) \cdot \tilde{D}^{\frac{1}{2}} \\
&= \tilde{D}^{-\frac{1}{2}}(\tilde{A}\tilde{D}^{-1})^{k} \tilde{D}^{\frac{1}{2}} \\
&= \tilde{D}^{-\frac{1}{2}}\tilde{P}^{k} \tilde{D}^{\frac{1}{2}} \,.
\end{split}
\end{equation}
Therefore, ${S_{sym}}^{k}$ converges to a stationary distribution $\pi=\frac{\tilde{D}}{2e+n}$ as $k$ goes to infinity.
\end{proof}

Theorem \ref{thm:1} shows that the distribution is only related to the graph structure while forgetting the initial features, so SGC suffers from over-smoothing under the distribution.
In SGC, the convolution corresponds to a fixed non-negative low-pass filter \cite{23} that retains the commonality while ignoring the difference, thus making the learned representations of connected nodes similar. Since the connected nodes of homophily and heterophily graphs usually belong to same cluster and different labels, respectively, SGC is suitable for homophily graphs but may not perform well with heterophily graphs.
\begin{theorem}\label{thm:2}
    Many GNNs such as SGC are equivalent to a low-pass filter, and can also be regarded as a high-pass filter.
\end{theorem}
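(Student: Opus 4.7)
The plan is to work spectrally. Writing $\tilde{L}_{sym} = I - S_{sym}$ for the augmented normalized Laplacian and diagonalizing $\tilde{L}_{sym} = U\Lambda U^{T}$ with $\Lambda = \mathrm{diag}(\lambda_{1}, \ldots, \lambda_{n})$ and $\lambda_{i} \in [0, 2)$, SGC's propagation operator factors as $S_{sym}^{k} = U(I-\Lambda)^{k}U^{T}$, exposing a spectral response $g(\lambda) = (1-\lambda)^{k}$. All of the analysis will live in this basis.

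First I would establish the low-pass claim: on the meaningful frequency range $\lambda \in [0, 1]$ the function $g(\lambda) = (1-\lambda)^{k}$ is nonnegative, attains its maximum at $\lambda = 0$ (the ``DC'' mode whose eigenvector is proportional to $\tilde{D}^{1/2}\mathbf{1}$, i.e., the stationary distribution produced in Theorem~\ref{thm:1}), and decays strictly monotonically in $\lambda$ for $k \geq 1$. Since graph frequency is ordered by $\lambda$---small $\lambda$ for smooth signals, large $\lambda$ for oscillatory ones---this shows $S_{sym}^{k}$ amplifies low frequencies relative to high ones, i.e., acts as a low-pass filter, recovering the standard interpretation and making precise the ``strong homophily assumption'' criticism of SGC from Limitation~4.

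For the high-pass reinterpretation, the idea is to re-express the same operator in the polynomial basis $\{\tilde{L}_{sym}^{j}\}_{j \geq 0}$ via the binomial theorem,
\begin{equation*}
S_{sym}^{k} \;=\; (I - \tilde{L}_{sym})^{k} \;=\; \sum_{j=0}^{k} \binom{k}{j}(-1)^{j}\,\tilde{L}_{sym}^{j},
\end{equation*}
so that SGC is a signed linear combination of a constant term and the kernels $\tilde{L}_{sym}^{j}$, each of which has spectral response $\lambda^{j}$ that vanishes at $\lambda = 0$ and is strictly increasing in $\lambda$, hence is high-pass. Equivalently, the complementary operator $I - S_{sym}^{k}$ carries spectral response $1 - (1-\lambda)^{k}$, which is $0$ at $\lambda = 0$ and monotonically increases on $[0,1]$, yielding a bona fide high-pass filter whose support in the spectrum is exactly the content SGC attenuates. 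Either decomposition lets us ``regard'' SGC as a high-pass filter in a precise sense, which is also the viewpoint that GPNet's sign factor $\beta = -1$ later exploits.

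The main obstacle is not computational but definitional: naively, $S_{sym}^{k}$ is strictly low-pass, so one must commit to what ``regarded as a high-pass filter'' is taken to mean. I would settle this at the outset by adopting the binomial/complementary viewpoint above, which is clean, parameter-free, and connects directly to the sign-factor machinery in Section~\ref{sec:sign}; the remaining steps are then only short monotonicity checks on polynomials in $\lambda$.
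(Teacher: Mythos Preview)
Your low-pass argument matches the paper's exactly: both identify the spectral response $(1-\tilde{\lambda}_i)^k$ of $S_{sym}^k$ and note its monotone decrease on $[0,1]$.

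For the high-pass direction you take a genuinely different route. The paper's argument is a one-line reparametrization of the \emph{learnable} weights: setting $\theta_{hp}=-\theta$ gives $\hat{Y}_{\text{SGC}}=\text{softmax}(-S_{sym}^{k}X\theta_{hp})$, so the SGC model class realizes the filter $-(1-\tilde{\lambda}_i)^k$, which is increasing in $\tilde{\lambda}_i$ and hence ``high-pass.'' The claim is about what the trainable model can express, not about rewriting the fixed propagation operator. Your binomial expansion, while algebraically correct, is weak as a high-pass characterization---\emph{every} polynomial filter in $S_{sym}$ rewrites as a signed polynomial in $\tilde{L}_{sym}$, so this does not single out SGC. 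Your complementary-filter observation ($I-S_{sym}^{k}$ has response $1-(1-\lambda)^k$) is clean and genuinely high-pass, but it concerns a different operator. What each approach buys: the paper's sign-absorption trick is shorter, lives at the parameter level, and is precisely the mechanism later formalized by GPNet's sign factor $\beta=-1$ (your final sentence gestures at this connection but does not use it as the actual argument); your decomposition viewpoint is operator-level and more geometric, but it misses the paper's real leverage, namely the learnability of $\theta$.
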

\begin{proof}
As mentioned in \eqLabel \eqref{eq7}, the resulting classifier of SGC is $\hat{Y}_{\text{SGC}}=\text{softmax}({S_{sym}}^{k}X\theta)$, then the propagation matrix ${S_{sym}}^{k}$ corresponds to filter coefficients $\hat{g}(\tilde{\lambda}_{i})=(1-\tilde{\lambda}_{i})^{k}$. Thus, ${S_{sym}}^{k}$ acts as a low-pass filter. Let $\theta_{hp}=-\theta$, then $\hat{Y}_{\text{SGC}}=\text{softmax}(-{S_{sym}}^{k}X\theta_{hp})$. Therefore, the filter coefficients are $\hat{g}(\tilde{\lambda}_{i})=-(1-\tilde{\lambda}_{i})^{k}$, which can be regarded as a high-pass filter.
\end{proof}
\subsection{Analysis of GPNet}
We follow Defferrard \etal \cite{29} to introduce the graph convolution between signal $x$ and filter $g$.
\begin{equation} \label{eq29}
g \ast x = U((U^{T}g)\odot (U^{T}x)) = Udiag(\hat{g}_{1}, \cdots, \hat{g}_{n})U^{T}x),
\end{equation}
where $diag(\hat{g}_{1}, \cdots, \hat{g}_{n})$ denotes a diagonal matrix of spectral filter coefficients. GCN simplifies the graph filter $diag(\hat{g}_{1}, \cdots, \hat{g}_{n})$ as $I - \Lambda$.

In \eqLabel \eqref{eq1}, the propagation matrix is $S_{sym}=\tilde{D}^{-\frac{1}{2}}\tilde{A}\tilde{D}^{-\frac{1}{2}}$, then the augmented normalized Laplacian matrix is defined as $\tilde{L}_{sym} = \tilde{D}^{-\frac{1}{2}}(\tilde{D}-\tilde{A})\tilde{D}^{-\frac{1}{2}}=I - S_{sym}$. Therefore, the filter coefficients are $\hat{g}(\tilde{\lambda}_{i})=(1-\tilde{\lambda}_{i})$, where $\tilde{\lambda}_{i}$ denote the eigenvalues of $\tilde{L}_{sym}$. In GPNet, the filter coefficients are $\hat{g}(\tilde{\lambda}_{i})=aggregate(\hat{g}_{1}(\tilde{\lambda}_{i}), \hat{g}_{2}(\tilde{\lambda}_{i}), \cdots, \hat{g}_{m}(\tilde{\lambda}_{i}))$, where $\hat{g}_{m}(\tilde{\lambda}_{i})=\alpha+\beta((1-\tilde{\lambda}_{i})^{d_{m}+q_{0}}+(1-\tilde{\lambda}_{i})^{q_{m}+d_{m}+q_{0}}+\cdots+(1-\tilde{\lambda}_{i})^{(k-1)q_{m}+d_{m}+q_{0}})$.
 
\begin{theorem}\label{thm:3}
    GPNet can approximate various graph filters.
\end{theorem}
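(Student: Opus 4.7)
The plan is to start from the explicit per-channel filter formula stated just before the theorem and show that by tuning the self-attention score, the sign factor, the offsets, and the aggregation, GPNet can realize the three qualitative classes of filter used in the graph-signal-processing literature (low-pass, high-pass, band-pass), and in fact a dense family of polynomial filters in the limit.

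First I would exploit three probe values of the single-channel coefficient $\hat{g}_{m}(\tilde{\lambda}_{i})=\alpha+\beta\sum_{j=0}^{k-1}(1-\tilde{\lambda}_{i})^{jq_{m}+d_{m}+q_{0}}$ on the spectrum $[0,2]$ of $\tilde{L}_{sym}$. At $\tilde{\lambda}_{i}=0$ every term equals one, giving $\hat{g}_{m}(0)=\alpha+\beta k$. At $\tilde{\lambda}_{i}=1$ every positive-power term vanishes, so $\hat{g}_{m}(1)=\alpha$. At $\tilde{\lambda}_{i}=2$ the terms take values $(-1)^{jq_{m}+d_{m}+q_{0}}$, whose sum is governed by the parity of the exponents. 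These three values pin down the qualitative shape of the response and make the role of each hyperparameter transparent.

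Second I would use these handles to reproduce the standard filter archetypes. With $\beta=+1$ and $\alpha\ge 0$, the response is largest at $\tilde{\lambda}_{i}=0$ and shrinks as $\tilde{\lambda}_{i}\to 2$, matching the non-negative low-pass filter analysed in Theorem \ref{thm:2}. Flipping to $\beta=-1$ inverts the sign of the propagation part, so the magnitude is concentrated at the high end of the spectrum, giving a high-pass shape and yielding the homophily/heterophily switch claimed in \secLabel \ref{sec:sign}. For band-pass filters I would pick $q_{m},d_{m},q_{0}$ so that consecutive exponents mix even and odd parities; destructive interference between $(1-\tilde{\lambda}_{i})^{s}$ terms of opposite sign then creates interior zeros, and the element-wise aggregation across the $m$ channels in \eqLabel \eqref{eq19} further sharpens the response, whether by taking per-frequency extrema (Max-FP, Min-FP) or averages (Avg-FP, Sum-FP).

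The main obstacle is making \emph{approximate} quantitative rather than merely qualitative. I would handle this by observing that each $\hat{g}_{m}$ is a polynomial in $(1-\tilde{\lambda}_{i})$ of degree $(k-1)q_{m}+d_{m}+q_{0}$, that Stone--Weierstrass guarantees uniform approximation of any continuous filter on $[-1,1]$ by such polynomials, and that with $q_{m}=1$ and $d_{m}+q_{0}$ chosen so every monomial degree up to the target appears in some channel, Sum-FP aggregation realizes an arbitrary linear combination (since $\alpha,\beta$ supply the sign and shift per channel). The technical subtlety to watch is that $1-\tilde{\lambda}_{i}$ ranges over $[-1,1]$ rather than $[0,1]$, so care is needed when invoking Weierstrass, and that Max-FP and Min-FP are nonlinear and do not commute with the sign flip induced by $\beta$, so the density argument has to be phrased for the Sum-FP (or Avg-FP) variants and stated as a qualitative approximation result for the extremum-based aggregations.
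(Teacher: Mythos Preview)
Your first two steps are correct and in fact more careful than what the paper does. The paper's own argument is extremely short: it fixes $m=1$, then simply exhibits three parameter settings---$\alpha>0,\beta=1$ for low-pass; $\alpha\ge 0,\beta=-1$ for high-pass; and $\alpha\ge 0,\beta=1,k=1,d_{1}=0,q_{0}=0$ for the constant (all-pass) filter $\hat{g}(\tilde{\lambda}_{i})=\alpha+1$---and declares the theorem proved. So ``various'' in the statement is cashed out as exactly these three qualitative shapes, and no density or approximation claim is made. Your probe-value analysis at $\tilde{\lambda}_{i}\in\{0,1,2\}$ and your band-pass construction via parity mixing go well beyond that; they are sound as qualitative arguments and give a clearer picture of how the hyperparameters control the response. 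The paper never mentions band-pass filters, and you never mention the all-pass case, but the overlap on low/high-pass is essentially the same idea executed with more care on your side.

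Where your proposal breaks down is the Stone--Weierstrass step. You write that ``$\alpha,\beta$ supply the sign and shift per channel,'' but in the model as defined in \secLabel~\ref{sec:gpcon} both $\alpha$ and $\beta$ are \emph{global} hyperparameters, and moreover $\beta\in\{-1,+1\}$ is a sign, not a real scalar. Under Sum-FP the aggregated filter is therefore
\[
\hat{g}(\tilde{\lambda}_{i})=m\alpha+\beta\sum_{r=1}^{m}\sum_{j=0}^{k-1}(1-\tilde{\lambda}_{i})^{jq_{r}+d_{r}+q_{0}},
\]
a polynomial whose non-constant coefficients are all equal to $\beta$ (or a small positive integer multiple of $\beta$ when exponents collide across channels). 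This family is far from dense in $C([-1,1])$: you cannot tune the magnitude of individual monomial coefficients, only which exponents appear. So the uniform-approximation claim does not follow, and the theorem as stated in the paper should be read only in the weak sense the paper proves---that a handful of archetypal filter shapes are attainable---not as a genuine universality result. If you drop the Stone--Weierstrass paragraph, what remains is a correct and strictly stronger version of the paper's proof.
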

\begin{proof}
Let $m=1$, then the filter coefficients of GPNet are $\hat{g}(\tilde{\lambda}_{i})=\alpha+\beta((1-\tilde{\lambda}_{i})^{d_{1}+q_{0}}+(1-\tilde{\lambda}_{i})^{q_{1}+d_{1}+q_{0}}+\cdots+(1-\tilde{\lambda}_{i})^{(k-1)q_{1}+d_{1}+q_{0}})$. As shown in \secLabel \ref{sec:gpcon}, we have $q_{0}\geq 0, q_{1}\geq 1$, and $d_{1}\geq 0$. We set $\alpha>0$ and $\beta=1$, then the filter coefficients become $\hat{g}(\tilde{\lambda}_{i})=\alpha+(1-\tilde{\lambda}_{i})^{d_{1}+q_{0}}+(1-\tilde{\lambda}_{i})^{q_{1}+d_{1}+q_{0}}+\cdots+(1-\tilde{\lambda}_{i})^{(k-1)q_{1}+d_{1}+q_{0}}$, which acts as a low-pass graph filter. We set $\alpha\geq0$ and $\beta=-1$, then the filter coefficients are $\hat{g}(\tilde{\lambda}_{i})=\alpha-(1-\tilde{\lambda}_{i})^{d_{1}+q_{0}}-(1-\tilde{\lambda}_{i})^{q_{1}+d_{1}+q_{0}}-\cdots-(1-\tilde{\lambda}_{i})^{(k-1)q_{1}+d_{1}+q_{0}}$, which achieves a high-pass graph filter. We set $\alpha\geq0$, $\beta=1$, $k=1$, $d_{1}=0$, and $q_{0}=0$, then the filter coefficients become $\hat{g}(\tilde{\lambda}_{i})=\alpha+1$, \ie an all-pass graph filter.
\end{proof}
Theorem \ref{thm:3} shows that GPNet can pay attention to signals of different frequencies to fit different types of networks.

\mysection{GPNet without over-smoothing} Equations \eqref{eq18}, \eqref{eq19}, and \eqref{eq24} suggest that the proposed geometric feature propagation carries the features from both input node and its nodes of geometric dilated neighborhoods by setting $\alpha>0$. This ensures that GPNet can prevent the over-smoothing even if the number of terms goes to infinity. More precisely, when $\alpha=0$, the feature propagation considers the features of these neighboring nodes $d_{m}+q_{0}, q_{m}+d_{m}+q_{0}, \cdots, (k-1)q_{m}+d_{m}+q_{0}$. Even if $k$ tends to infinity, the feature propagation always includes the features that are $d_{m}+q_{0}$ hops away. Similarly, the feature propagation always includes the features of input node and its neighboring nodes that are $d_{m}+q_{0}$ hops away when $\alpha>0$. Therefore, our GPNet can avoid the over-smoothing even with $k$ going to infinity.

\begin{table*}[!tb]
	\centering
	\caption{\textbf{Dataset statistics}. Net. denotes Network.}
	\label{tab:1}
	\vspace{-0.2cm}
	\setlength{\tabcolsep}{1.43mm}{
		\begin{tabular}{llllllllll}
			\toprule
			Dataset  & Cora & Citeseer & Pubmed & Reddit & Cornell & Texas & Wisconsin & Chameleon & Squirrel \\
			\midrule
			Nodes     & 2,708 & 3,327  & 19,717 & 233K & 183 & 183 & 251 & 2,277 & 5,201 \\
			Features & 1,433 & 3,703  & 500 & 602 & 1,703 & 1,703 & 1,703 & 2,325 & 2,089    \\
			Classes   & 7 & 6 &  3 & 41 & 5 &  5 & 5 & 5 &  5    \\
			Edges & 5,429 & 4,732  & 44,338  & 11.6M & 295  & 309 & 499 & 36,101  & 198K    \\
			Network & Homophily & Homophily & Homophily & Homophily & Heterophily & Heterophily & Heterophily & Heterophily  & Heterophily \\
			Type & Citation Net. & Citation Net.  & Citation Net. & Social Net.  & Webpage Net.  & Webpage Net. & Wikipedia Net. & Wikipedia Net. & Wikipedia Net. \\
			\bottomrule
	\end{tabular}}
\end{table*}
\subsection{Analysis of Relation between GPNet and Representative Models}
\mysection{Relation of GPNet and SGC} SGC with 2-hops is:
\begin{equation} \label{eq30}
    Y_{\text{SGC}}=\text{softmax}({S_{sym}}^{2}X\theta) \,.
\end{equation}
In GPNet, let $\alpha=0$, $\beta=1$, $k=1$, and $d_{1}+q_{0}=2$, then $\hat{S}_{adj}=g_{1}(\hat{A})_{sum}=\hat{A}^{2}$. Thus the model becomes:
\begin{equation} \label{eq31}
Y_{\text{GPNet}} = \text{softmax}(\hat{A}^{2}XW) \,.
\end{equation}
As $\hat{A}$ and $W$ are the augmented normalized adjacency matrix and trainable weight matrix respectively equivalent to $S_{sym}$ and $\theta$ in \eqLabel \eqref{eq30}. Therefore, SGC can be reduced to a special case of GPNet.

\mysection{Relation of GPNet and gfNN} In gfNN~\cite{add_1}, the output prediction $Y_{\text{gfNN}}$ is:
\begin{equation} \label{eq32}
    Y_{\text{gfNN}}=\text{softmax}(\sigma(\hat{A}^{2}XW_{1})W_{2}) \,,
\end{equation}
where $\sigma$ is an activation function, $W_{1}$ and $W_{2}$ are different weight matrices. If we remove the activation function or use an identity activation function, then $Y_{\text{gfNN}}$ becomes:
\begin{equation} \label{eq33}
Y_{\text{gfNN}} = \text{softmax}(\hat{A}^{2}XW_{1}W_{2}) \,.
\end{equation}
Let $W=W_{1}W_{2}$, then gfNN can be regarded as a special case of GPNet.

\mysection{Relation of GPNet and MLP} In GPNet, let $m=1$, $k=1$, $d_{1}=0$, and $q_{0}=0$, then $\hat{S}_{adj}=g_{1}(\hat{A})_{sum}=(\alpha+\beta)\hat{A}^{0}$. By $W_{mlp}=(\alpha+\beta)W$, the model takes the following form:
\begin{equation} \label{eq34}
Y_{\text{GPNet}} = \text{softmax}((\alpha+\beta)\hat{A}^{0}XW)=\text{softmax}(XW_{mlp}) \,.
\end{equation}
Thus, MLP with a layer can be regarded as a special case of GPNet.

\subsection{Analysis of Time Complexity and Parameters} \label{sec:time}
As described in \secLabel \ref{sec:gpcon}, the computation of $\bar{H} = \hat{S}_{adj}X$ (geometric feature propagation) requires no weight. Therefore, by precomputing a parameter-free feature extraction $\bar{H}$, the entire training time and parameters of GPNet are the same as a particularly efficient SGC.

\section{Experiments} \label{sec:experiments}
We evaluate the performance of GPNet against the competitors, with the goal of answering the following research questions:

\textbf{Q1} Can GPNet tackle the over-smoothing when obtaining the features of neighboring nodes with large k-hops?

\textbf{Q2} Can GPNet perform well on heterophily graphs without trading off accuracy in homophily?

\textbf{Q3} Can GPNet work well on large-scale graphs when many GNNs fail to deal with such data?

\textbf{Q4} How does GPNet compare to the state-of-the-art (SOTA) models for different types of networks?

\textbf{Q5} How does GPNet compare to baselines in terms of the complexity and parameters?
\subsection{Dataset}
\label{sec:data}
We evaluate the performance of GPNet against the competitors on multiple real-world datasets chosen from benchmarks commonly used on node classification tasks. More specifically, for semi-supervised node classification, we use three homophilic datasets Cora, Citeseer, and Pubmed \cite{49} on public splits \cite{1}. For full-supervised node classification, we also use the three homophilic datasets and five heterophily datasets including webpage graphs Cornell, Texas, and Wisconsin  \cite{45}, and Wikipedia graphs Chameleon and Squirrel \cite{50}. For these datasets of the full-supervised node classification tasks, we follow Pei \etal \cite{45} to split the nodes of training, validation and testing. For inductive learning, we include a large graph Reddit \cite{32}. We summarize the statistics of these datasets in \tblLabel \ref{tab:1}.

\begin{table}[!htb]
\centering
\caption{\textbf{The classification performance (\%) over the methods on semi-supervised node classification tasks}. Avg. R. denotes Average Rank.}
\label{tab:2}
\vspace{-0.2cm}
\setlength{\tabcolsep}{1mm}{
\begin{tabular}{l|lll|ll}
\toprule
Method  & Pubmed & Citeseer & Cora & Avg. R. & Average \\
\midrule
GCN     & 79.0 & 70.3  & 81.5 & 9.83 & 76.9 \\
GAT & 79.0$ \pm 0.3 $ & 72.5$ \pm 0.7 $  & 83.0$ \pm 0.7 $ & 7.67 & 78.2    \\
SGC  & 78.9$ \pm 0.0 $ & 71.9$ \pm 0.1 $  & 81.0$ \pm 0.0 $ & 10.17 & 77.3    \\
MixHop-learn  & 80.8$ \pm 0.6 $ & 71.4$ \pm 0.8 $  & 81.9$ \pm 0.4 $ & 7 & 78.0    \\
APPNP  & 80.1 & 71.8  & 83.3 & 6.5 & 78.4 \\
ARMA  & 78.9 & 72.5  & 83.4 & 7.33 & 78.3 \\
IncepGCN+DropEdge  & 79.5 & 72.7  & 83.5 & 4.17 & 78.6 \\
JKNet  & 78.1 & 69.8  & 81.1 & 11.67 & 76.3 \\
JKNet+DropEdge  & 79.2 & 72.6  & 83.3 & 6.17 & 78.4 \\
GCNII  & 80.2$ \pm 0.4 $ & 73.4$ \pm 0.6 $  & \textbf{85.5$\pm$0.5} & 2 & 79.7 \\
FAGCN  & 79.4$ \pm 0.3 $ & 72.7$ \pm 0.8 $  & 84.1$ \pm 0.5 $ & 4.17 & 78.7 \\
GPNet (ours)  & \textbf{81.5$\pm$0.0} & \textbf{74.8$\pm$0.1}  & 84.6$ \pm 0.1 $ & \textbf{1.33} & \textbf{80.3} \\
\bottomrule
\end{tabular}}
\end{table}
\begin{table*}[!htb]
\centering
\caption{\textbf{The classification performance (\%) over the methods on full-supervised node classification tasks}. Avg. R. denotes Average Rank.}
\label{tab:3}
\vspace{-0.2cm}
\setlength{\tabcolsep}{2.2mm}{
\begin{tabular}{l|lllll|lll|ll}
\toprule
Method  & Chameleon & Squirrel & Wisconsin	& Cornell & Texas & Pubmed & Citeseer & Cora & Avg. R. & Average \\
\midrule
MLP & 46.21$ \pm 3.0 $ & 28.77$ \pm 1.6 $ & 85.29$ \pm 3.3 $ & 81.89$ \pm 6.4 $ & 80.81$ \pm 4.8 $ & 87.16$ \pm 0.4 $ & 74.02$ \pm 1.9 $ & 75.69$ \pm 2.0 $ & 14.13 & 69.98 \\
GCN & 64.82$ \pm 2.2 $ & 53.43$ \pm 2.0 $ & 51.76$ \pm 3.1 $ & 60.54$ \pm 5.3 $ & 55.14$ \pm 5.2 $ & 88.42$ \pm 0.5 $ & 76.50$ \pm 1.4 $ & 86.98$ \pm 1.3 $ & 12.88 & 67.20 \\
GAT & 60.26$ \pm 2.5 $ & 40.72$ \pm 1.6 $ & 49.41$ \pm 4.1 $ & 61.89$ \pm 5.1 $ & 52.16$ \pm 6.6 $ & 86.33$ \pm 0.5 $ & 76.55$ \pm 1.2 $ & 87.30$ \pm 1.1$ & 14.25 & 64.33 \\
SGC  & 33.70$ \pm 3.5 $ & 46.90$ \pm 1.7 $ & 53.49$ \pm 5.1 $ & 58.68$ \pm 3.8 $ & 60.43$ \pm 5.1 $ & 85.11$ \pm 0.5 $ & 76.01$ \pm 1.8 $ & 87.07$ \pm 1.2 $ & 15.50 & 62.67 \\
MixHop  & 60.50$ \pm 2.5 $ & 43.80$ \pm 1.5 $ & 75.88$ \pm 4.9 $ & 73.51$ \pm 6.3 $ & 77.84$ \pm 7.7 $ & 85.31$ \pm 0.6 $ & 76.26$ \pm 1.3 $ & 87.61$ \pm 0.9 $ & 12.38 & 72.59 \\
APPNP  & 47.50$ \pm 1.8 $ & 33.51$ \pm 2.0 $ & 81.42$ \pm 4.3 $ & 77.02$ \pm 7.0 $ & 78.37$ \pm 6.0 $ & 87.94$ \pm 0.6 $ & 77.06$ \pm 1.7 $ & 87.71$ \pm 1.3 $ & 11.75 & 71.32 \\
Geom-GCN  & 60.9 & 38.14 & 64.12 & 60.81 & 67.57 & 90.05 & \textbf{77.99} & 85.27 & 11.38 & 68.11 \\
$\text{H}_{\text{2}}$GCN  & 60.11$ \pm 2.2 $ & 36.48$ \pm 1.9 $ & 87.65$ \pm 5.0 $ & 82.70$ \pm 5.3$ & 84.86$ \pm 7.2 $ & 89.49$ \pm 0.4 $ & 77.11$ \pm 1.6 $ & 87.87$ \pm 1.2 $ & 7.69 & 75.78 \\
FAGCN  & 56.31$ \pm 3.2 $ & 42.43$ \pm 2.1 $ & 81.56$ \pm 4.6 $ & 76.12$ \pm 7.7 $ & 78.11$ \pm 5.0 $ & 85.74$ \pm 0.4 $ & 74.86$ \pm 2.4 $ & 83.21$ \pm 2.0 $ & 13.88 & 72.29 \\
LINKX & 68.42$ \pm 1.4 $ & 61.81$ \pm 1.8 $ & 75.49$ \pm 5.7 $ & 77.84$ \pm 5.8 $ & 74.60$ \pm 8.4 $ & 87.86$ \pm 0.8 $ & 73.19$ \pm 1.0 $ & 84.64$ \pm 1.1 $ & 12 & 75.48 \\
GPRGNN & 46.58$ \pm 1.7 $ & 31.61$ \pm 1.2 $ & 82.94$ \pm 4.2 $ & 80.27$ \pm 8.1 $ & 78.38$ \pm 4.4 $ & 87.54$ \pm 0.4 $ & 77.13$ \pm 1.7 $ & 87.95$ \pm 1.2 $ & 10.94 & 71.55 \\
GGCN & 71.14$ \pm 1.8 $ & 55.17$ \pm 1.6 $ & 86.86$ \pm 3.3 $ & 85.68$ \pm 6.6 $ & 84.86$ \pm 4.6 $ & 89.15$ \pm 0.4 $ & 77.14$ \pm 1.5 $ & 87.95$ \pm 1.1 $ & 5.50 & 79.74 \\
GCNII  & 63.86$ \pm 3.0 $ & 38.47$ \pm 1.6 $ & 80.39$ \pm 3.4 $ & 77.86$ \pm 3.8 $ & 77.57$ \pm 3.8 $ & \textbf{90.15$\pm$0.4} & 77.33$ \pm 1.5 $ & \textbf{88.37$\pm$1.3} & 7.75 & 74.25 \\
DeepGCNs & 48.75$\pm$2.6 & 31.23$\pm$1.4 & 72.75$\pm$4.8 & 68.38$\pm$5.9 &70.27$\pm$7.1 & 88.80$\pm$0.4 &75.58$\pm$1.2 & 85.61$\pm$1.9 & 14.75 & 67.67  \\
$\text{UDGNN}_{\text{GCN}}$ & 74.53$\pm$1.2 & 68.13$\pm$2.6 & 87.64$\pm$3.7 & 84.32$\pm$7.3 & 84.60$\pm$5.3 & 89.85$\pm$0.4 & 76.05$\pm$1.8 & 86.97$\pm$1.2 & 5.75 & 81.51  \\
Conn-NSD & 65.21$\pm$2.0 & 45.19$\pm$1.6 & \textbf{88.73$\pm$4.5} & 85.95$\pm$7.7 & 86.16$\pm$2.2 & 89.28$\pm$0.4 & 75.61$\pm$1.9 & 83.74$\pm$2.2 & 7.31 & 77.48  \\
GloGNN & 69.78$\pm$2.4 & 57.54$\pm$1.4 & 87.06$\pm$3.5 & 83.51$\pm$4.3 & 84.32$\pm$4.2 & 89.62$\pm$0.4 & 77.41$\pm$1.7 & 88.31$\pm$1.1 & 4.63 & 79.69  \\
GloGNN++ & 71.21$\pm$1.8 & 57.88$\pm$1.8 & 88.04$\pm$3.2 & \textbf{85.95$\pm$5.1} & 84.05$\pm$4.9 & 89.24$\pm$0.4 & 77.22$\pm$1.8 & 88.33$\pm$1.1 & 3.81 & 80.24  \\
GPNet (ours)  & \textbf{78.61$\pm$0.2} & \textbf{71.57$\pm$0.1} & 87.45$\pm$0.0 & 84.10$\pm$0.1 & \textbf{87.84$\pm$0.0} & 89.18$\pm$0.0 & 77.20$\pm$0.1 & 88.21$\pm$0.1 & \textbf{3.75} & \textbf{83.02}  \\
\bottomrule
\end{tabular}}
\end{table*}
\begin{table}[!htb]
\centering
\caption{\textbf{The test F1-micro score (\%) over the methods on Reddit}. OOM denotes out of memory.}
\label{tab:4}
\vspace{-0.2cm}
\setlength{\tabcolsep}{4mm}{
\begin{tabular}{ll}
\toprule
Method  & Reddit \\
\midrule
SAGE-GCN & 93.0 \\
SAGE-LSTM & 95.4 \\
SAGE-mean & 95.0 \\
FastGCN & 93.7 \\
GaAN & \textbf{96.4} \\
DGI & 94.0 \\
SGC  & 94.9    \\
GCN  & OOM \\
GAT & OOM    \\
GCNII  & OOM \\
GPRGNN  & OOM \\
GPNet (ours)  & 95.8 $\pm$ 0.0 \\
\bottomrule
\end{tabular}}
\end{table}
\subsection{Model Configurations}
\label{sec:configuration}
We implement our proposed GPNet using Adam \cite{51} optimizer in Pytorch \cite{52}. We now report the search space of hyperparameters that are optimized repeatedly on each dataset. The learning rate is searched from \{0.0003, 0.01, 0.05, 0.1\}, the dropout is searched from \{0, 0.1, 0.3, 0.8, 0.95\}, the $L2$ regularization factor is searched from \{\text{$\text{1e}^{-\text{10}}$}, \text{$\text{1e}^{-\text{7}}$}, \text{$\text{7e}^{-\text{6}}$}, \text{$\text{5e}^{-\text{5}}$}, \text{$\text{6e}^{-\text{5}}$}, \text{$\text{1e}^{-\text{4}}$}, \text{$\text{2e}^{-\text{4}}$}, \text{$\text{5e}^{-\text{4}}$}, \text{$\text{6e}^{-\text{3}}$}\}, the training epochs are searched from {700, 800, 1,000, 1,200, 2,000, 2,200, 5,000, 7,000, 50,000}, the $k$ is searched from \{2, 3, 4, 5, 7, 8, 9, 13\}, the $m$ is searched from \{2, 3\}, the $q_{1}$ is searched from \{2, 4, 5\}, the $q_{2}$ is searched from \{2, 5, 6\}, the $d_{2}$ is searched from \{1, 3\}, the $d_{3}$ is searched from \{6, 9\}, and the aggregation function is searched from {Max-FP, Min-FP, Avg-FP, Sum-FP}. The $d_{1}$, $q_{0}$ and $q_{3}$ are set to 0, 1 and 6, respectively. We set the self-attention score $\alpha$ based on different types of networks, with $\alpha=1$ on citation networks, with $\alpha=0$ on Wikipedia networks, with $\alpha=20, \alpha=19, \alpha=70$ on Cornell, Texas, and Wisconsin, and with $\alpha=0.2$ on Reddit. We set the sign factor $\beta=-1$ on Texas and Wisconsin and $\beta=1$ on other datasets. All results of the proposed methods are averaged over 10 runs with random weight initializations.
\subsection{Baselines}
\label{sec:baseline}
\mysection{Semi-supervised node classification} For the semi-supervised node classification tasks, we compare our method with the following baselines: (1) shallow GNNs: GCN \cite{1}, GAT \cite{2}, SGC \cite{23}, MixHop-learn \cite{3}, APPNP \cite{18}, and ARMA \cite{add_2}; (2) deep GNNs: JKNet \cite{20}, GCNII \cite{22}, and FAGCN \cite{46}. Furthermore, we follow Rong \etal \cite{21} to equip DropEdge \cite{21} on two backbones: IncepGCN \cite{21} and JKNet \cite{20}.

\mysection{Full-supervised node classification} For the full-supervised node classification, we compare GPNet with 18 baselines, including (1) MLP; (2) classical GNNs: GCN \cite{1}, GAT \cite{2}, SGC \cite{23}, MixHop \cite{3}, and APPNP \cite{18}; (3) GNNs for heterophily settings: (4) Geom-GCN \cite{45}, $\text{H}_{\text{2}}$GCN \cite{53}, FAGCN \cite{46}, LINKX \cite{54}, GPRGNN \cite{48}, and GGCN \cite{55}; (5) deep GNNs: GCNII \cite{22} and DeepGCNs \cite{56}; (6) recent SOTA GNNs: UDGNNGCN \cite{57}, Conn-NSD \cite{58}, GloGNN \cite{59}, and  GloGNN++ \cite{59}.

\mysection{Inductive learning} For the large graph Reddit, several general models that target large graphs are listed as baselines: SAGE-GCN \cite{32}, SAGE-LSTM \cite{32}, SAGE-mean \cite{32}, FastGCN \cite{9}, GaAN \cite{39}, and DGI \cite{61}. In addition, on the large graph, we also compare our method with recent SOTA GNNs, including GCN \cite{1}, GAT \cite{2}, SGC \cite{23}, GCNII \cite{22}, and GPRGNN \cite{48}.
\begin{figure*}
    \centering
    \includegraphics[width=0.9\textwidth]{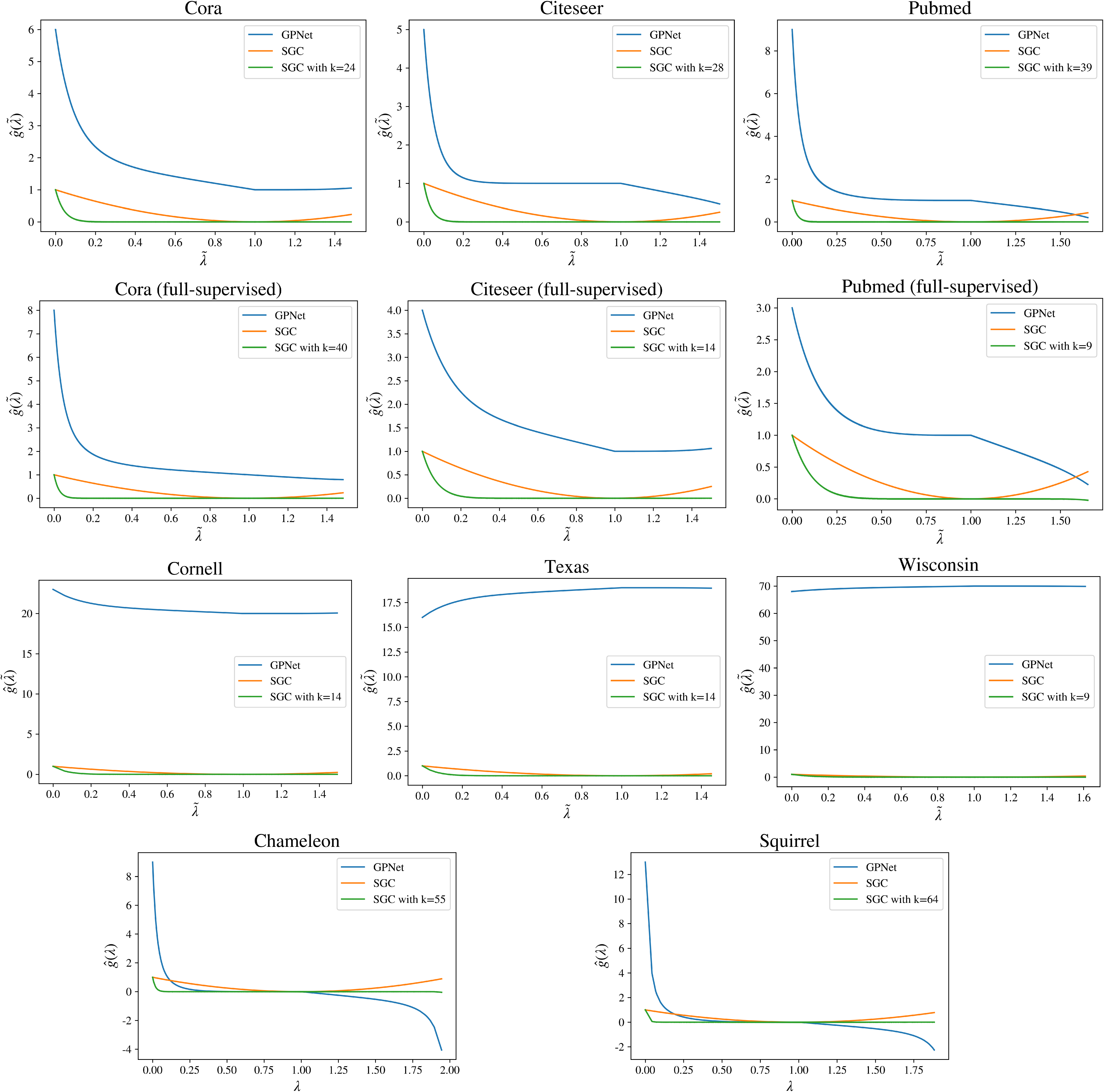}
    \caption{\textbf{Filters learnt on graph datasets by GPNet and SGC}. $\lambda / \tilde{\lambda}$ are the eigenvalues with / without self-loops, and $\hat{g}(\tilde{\lambda}) / \hat{g}(\lambda)$ are the corresponding filter coefficients.
    }
\label{fig:3}
\end{figure*}
\begin{figure}[!hb]
    \centering
    \includegraphics[page=9, width=0.6\columnwidth]{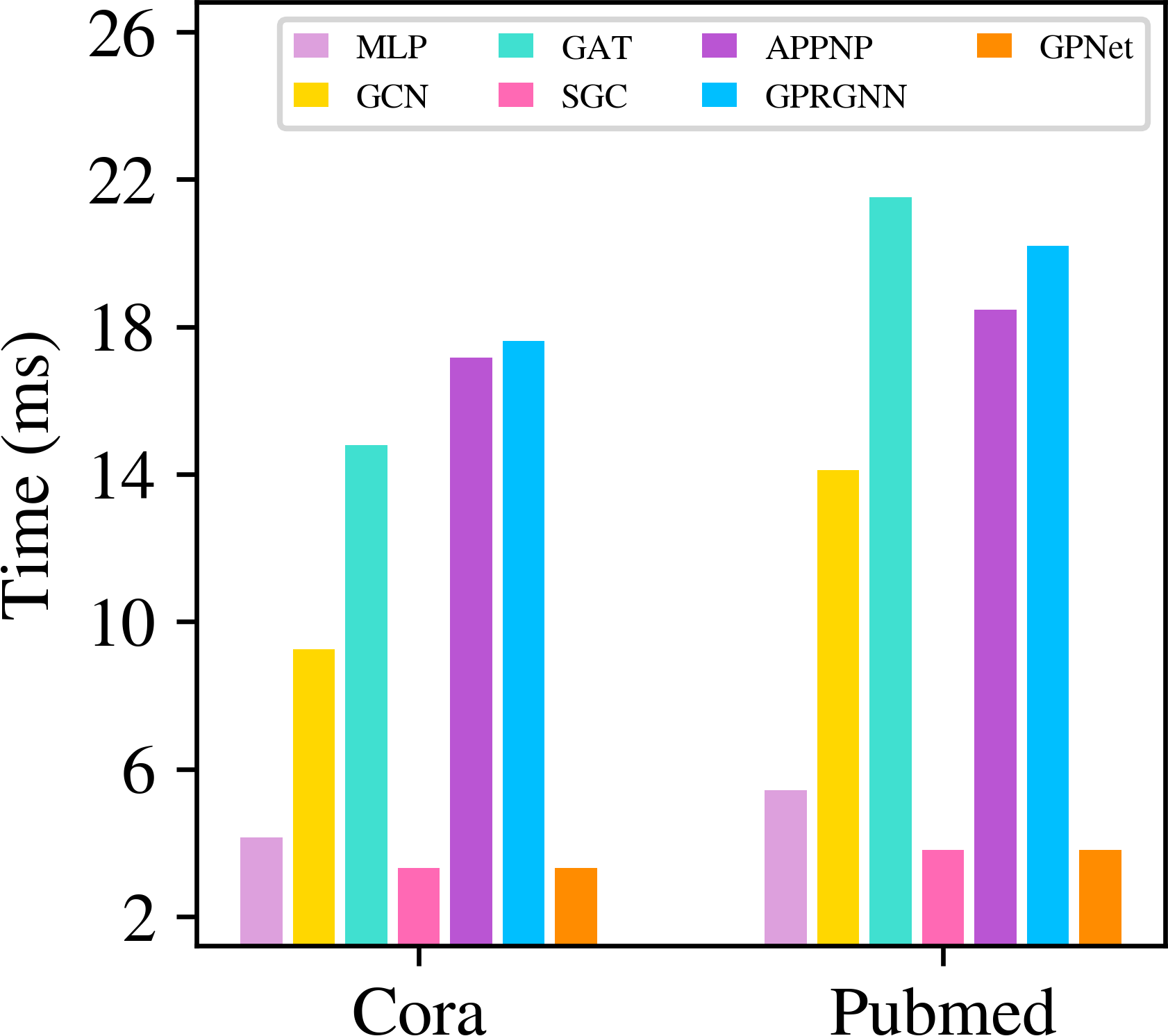}
    \caption{\textbf{Performance over average training time per epoch on Cora and Pubmed}.}
\label{fig:4}
\end{figure}
\begin{figure*}[!htb]
    \centering
    \includegraphics[width=\textwidth]{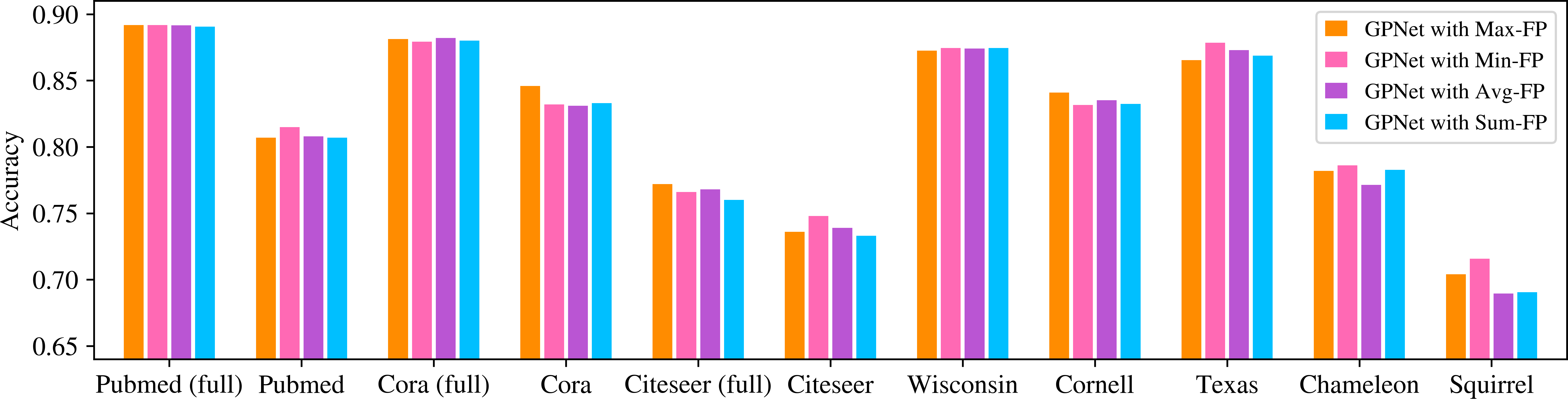}
    \caption{\textbf{Performance of GPNet with different aggregation functions on both homophily and heterophily datasets}.}
\label{fig:5}
\end{figure*}
\begin{figure*}[!htb]
    \centering
    \includegraphics[width=\textwidth]{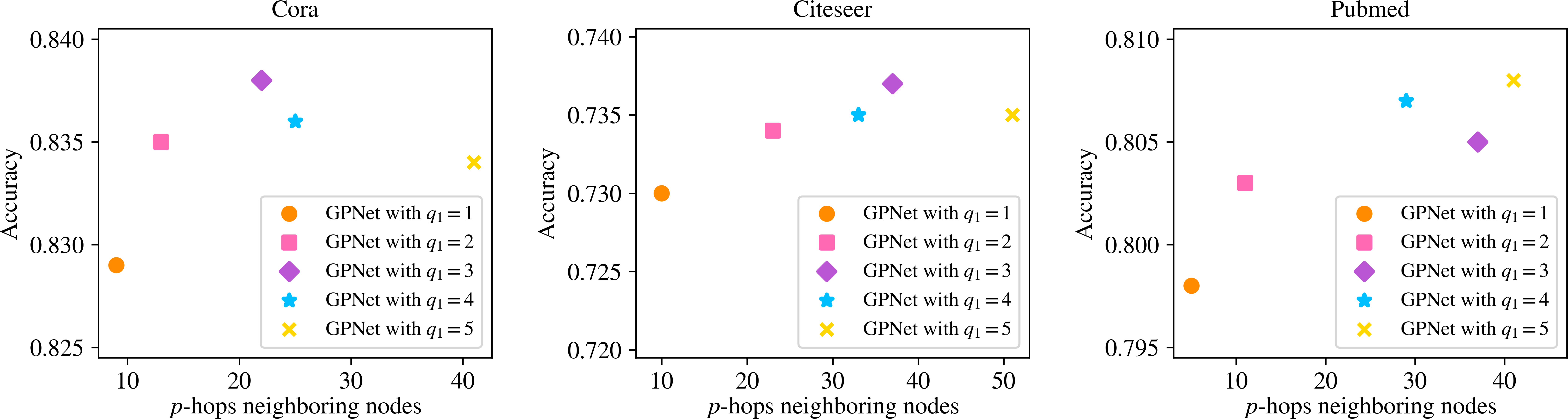}
    \caption{\textbf{Effect of GPNet with various dilation factors ($q_{1}$)}. $x$-axis denotes the p-hops neighboring nodes captured by the respective models to achieve the best performance.}
\label{fig:6}
\end{figure*}

\begin{figure}[!htb]
    \centering
    \includegraphics[width=0.7\columnwidth]{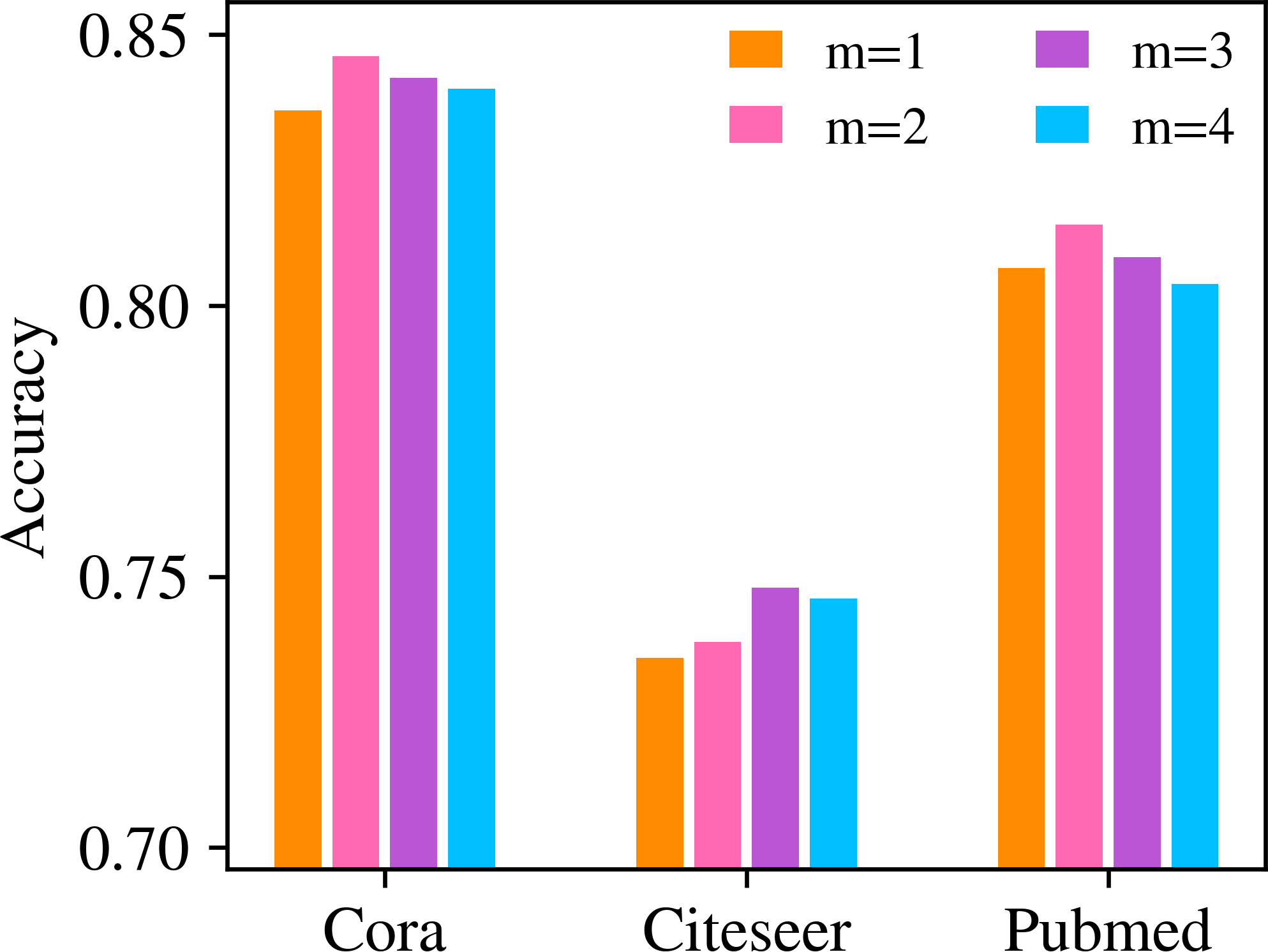}
    \caption{\textbf{Effect of channels ($m$)}.}
\label{fig:7}
\end{figure}

\subsection{Results}
\label{sec:result}
\mysection{Results for semi-supervised node classification} \tblLabel \ref{tab:2} summarizes the performance of GPNet to these SOTA baselines on the semi-supervised node classification tasks. We have these findings: (1) SGC that removes the activation function achieves similar performance to GCN model, which suggests that non-linear functions are not necessary to improve the results. (2) High-order GNNs such as MixHop-learn and APPNP improve the expressive power of GCN with one-order. (3) Deep GNNs such as GCNII outperform GCN by significant margins. This shows that these techniques of deep GNNs can effectively address the over-smoothing of multi-layer GCN. (4) Instead of stacking network layers, GPNet, obtains the SOTA performance in terms of average rank and average accuracy among all baselines on Pubmed, Citeseer, and Cora, using the geometric polynomials of adjacency matrices with various dilation factors, improves over GCN by 3.2\%, 6.4\%, and 3.8\%, improves over SGC by 3.3\%, 4.0\%, and 4.4\%, and improves over FAGCN by 2.6\%, 2.9\%, and 0.6\%, respectively. These results provide positive answers to question Q1.

\mysection{Results for full-supervised node classification} To support answering Q2 and Q4, we compare GPNet with these SOTA node classification baselines on the full-supervised node classification tasks, and the results are summarized in \tblLabel \ref{tab:3}. From the table, we make the following observations: (1) MLP that uses only node's own feature performs surprisingly well on webpage networks (Wisconsin, Cornell, and Texas), but achieves poor performance on these networks with a medium-scale number of nodes and edges (Wikipedia networks and citation networks). This shows the importance of node's own feature to small-scale graphs. (2) GCNII with the initial residual and identity mapping significantly outperforms that of GCN, GAT, and SGC, and achieves best performance with homophily graphs. This shows that the initial residual and identity mapping mechanisms effectively address the over-smoothing of classical GNN models. (3) Overall, GNNs with heterophily settings achieve superior results on heterophily datasets. For example, on Chameleon, Squirrel, Wisconsin, Cornell, and Texas datasets, the accuracy scores of GGCN are 71.14\%, 55.17\%, 86.86\%, 85.68\%, and 84.86\%, respectively. These results provide strong support for the techniques used on heterophily datasets. (4) We observe that GPNet achieves new SOTA performance in terms of average rank and average accuracy over all the datasets. This demonstrates the effectiveness of the proposed model to performance improvement on both homophily and heterophily datasets.

\mysection{Results for inductive learning} \tblLabel \ref{tab:4} shows the results of GPNet and competing methods on Reddit. On Reddit, we observe that GPNet ranks second and the GNNs such as GAT fail to run. These results provide positive answers to question Q3.
\begin{figure*}[!htb]
    \centering
    \includegraphics[width=\textwidth]{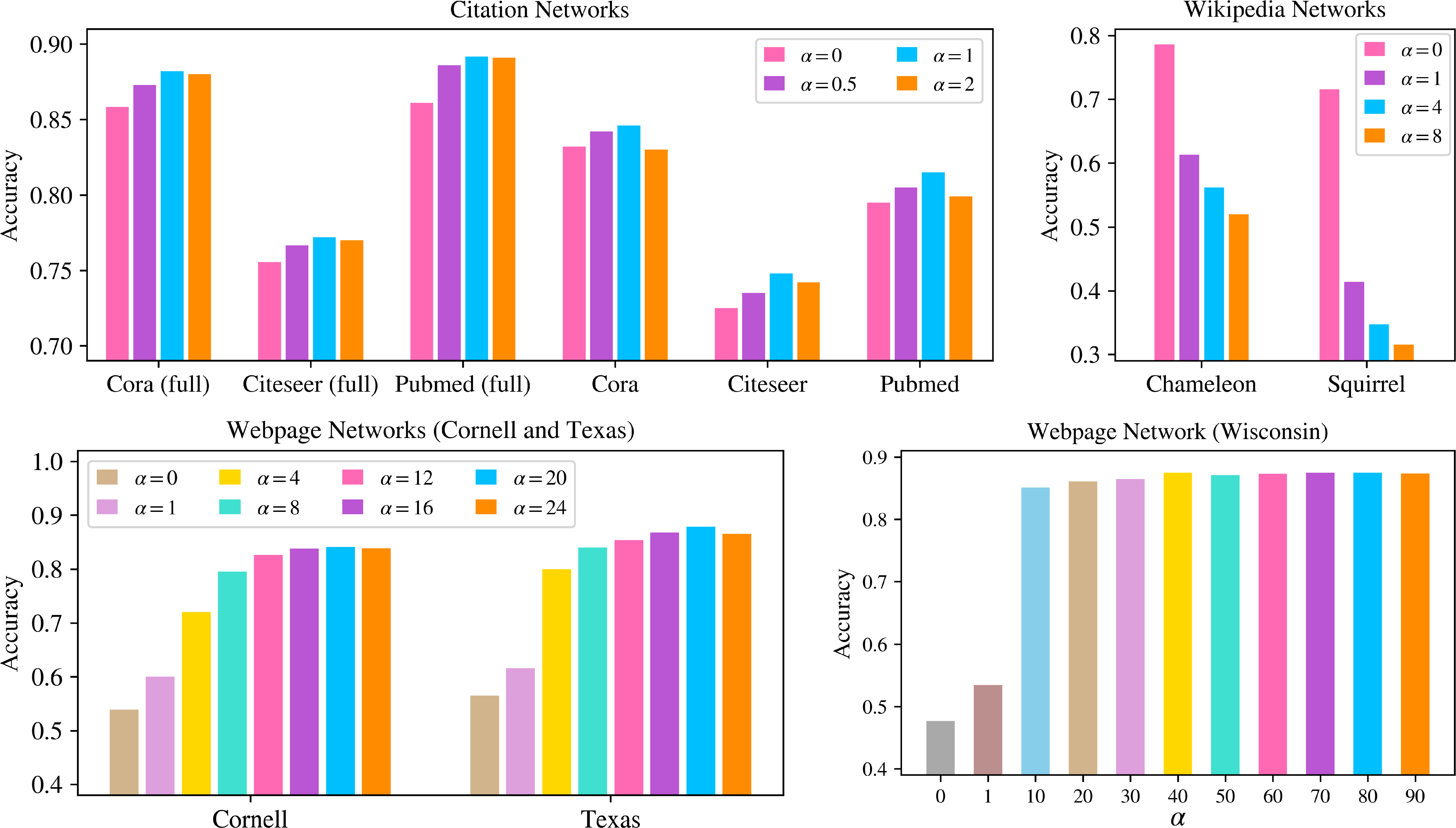}
    \caption{\textbf{Effect of self-attention score ($\alpha$)}. Citation networks are homophily networks, and other networks are heterophily networks.}
\label{fig:8}
\end{figure*}
\begin{figure*}[!htb]
    \centering
    \includegraphics[width=\textwidth]{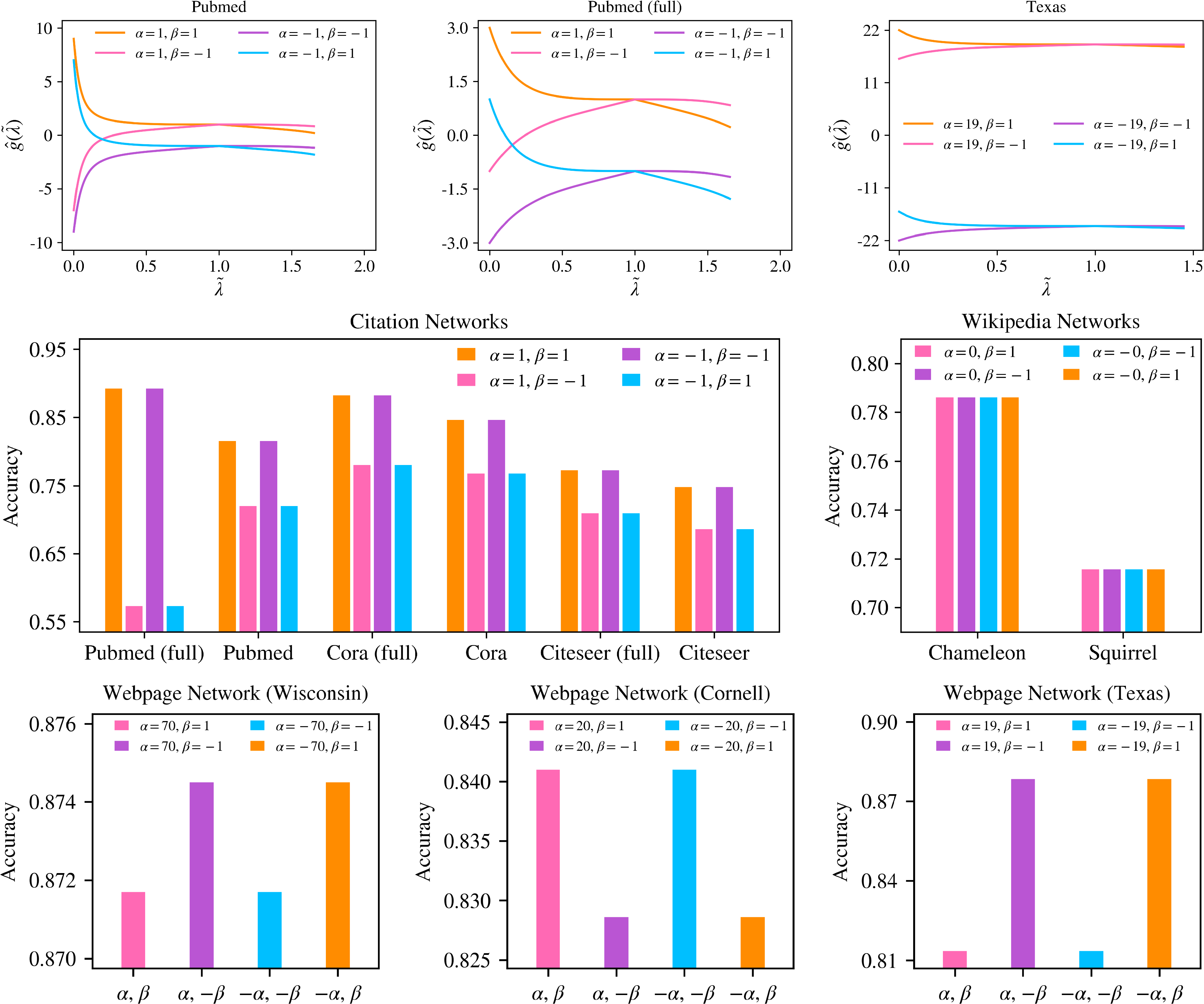}
    \caption{\textbf{Effect of sign factor ($\beta$) and filter}. Citation networks are homophily networks, and other networks are heterophily networks.}
\label{fig:9}
\end{figure*}

\subsection{Learning of Filters}
\label{sec:efficiency}
\figLabel \ref{fig:3} depicts the filters learnt by GPNet and SGC on different datasets. On all datasets, SGC retains the low-frequency signals of node features while filtering out the high-frequency signals of node features. However, the high-frequency signals are helpful for heterophily graphs \cite{46}. Therefore, SGC is optimal for homophily datasets but fails to work well on heterophily datasets. \figLabel \ref{fig:3} shows that SGC with $k=24$ model suffers from the over-smoothing and almost ignores all node features on Cora. GPNet pays attention to signals of different frequencies according to different graph statistics (homophily ratio, nodes, and edges). In GPNet, we mainly consider the low-frequency signals while paying less attention to the high frequency signals on homophily datasets. In addition, on Cornell, Texas, and Wisconsin datasets with heterophily and small-scale nodes and edges, GPNet captures both the low-frequency and high-frequency signals. On other heterophily datasets, GPNet mainly considers the high-frequency signals.

\subsection{Analysis of Efficiency and Parameters}
\label{sec:efficiency}
\figLabel \ref{fig:4} lists the performance of MLP and the representative GNNs over their training time per epoch relative to that of GPNet on Cora and Pubmed datasets. In \figLabel \ref{fig:4}, in terms of training time per epoch, GPNet achieves as good performance as SGC, and takes fewer computations compared to other baselines. The larger datasets see the most benefit. For example, on Pubmed, compared with GCN, GPNet achieves a 73.0\% speed up. As shown in \secLabel \ref{sec:time}, we can precompute the parameter-free feature extraction $\bar{H}$ in the same way as SGC. Then GPNet and SGC only learn a single weight matrix during training. This indicates that both of them can consistently provide fewer computations and parameters, which provides positive answers to question Q5.
\subsection{Ablation Study}
\label{sec:ablation}
\mysection{Effect of Aggregation function} We conduct the experiments using the proposed aggregation functions on both homophily and heterophily datasets. The results are summarized in \figLabel \ref{fig:5}. The best performance on different datasets varies slightly for GPNet models with different aggregation functions. Specifically, GPNet with Max-FP achieves the highest test accuracy on Cora, Citeseer (full), and Cornell datasets, GPNet with Min-FP achieves the best results on Pubmed (full), Pubmed, Citeseer, Texas, Chameleon, and Squirrel datasets, GPNet with Avg-FP obtains the highest classification accuracy on Cora (full) dataset, and GPNet with Sum-FP obtains the best performance on Wisconsin dataset.

\mysection{Effect of dilation} We take GPNet with $m=1$ as an example, repeatedly optimize the number of terms of the geometric polynomial using different dilation factors to achieve the best test accuracy for the respective models. The results are summarized in \figLabel \ref{fig:6}. We observe that GPNet models with $q_{1}>1$ can capture the features of neighboring nodes at more distant distances and achieve better performance, compared to GPNet with $q_{1}=1$.

\mysection{Effect of channel} \figLabel \ref{fig:7} compares the performance of GPNet models with different channels. From the results, we observe that GPNet models with $m$=2 and 3 outperform GPNet with $m$=1 by a large margin. This shows that multiple channels contribute to the performance improvement.

\mysection{Effect of self-attention score} \figLabel \ref{fig:8} shows the comparison results of the proposed models with various self-attention scores for different types of networks. From the figure, we make the following observations: (1) On the citation networks, the accuracy of GPNet improves as the self-attention score increases (increases $\alpha$) until $\alpha$ of 1. By further increasing $\alpha$, the accuracy of GPNet begins to decrease. (2) On the medium-scale Wikipedia networks, GPNet with $\alpha=0$ achieves the best performance, and the performance of the model decreases as $\alpha$ increases. (3) On the small-scale webpage networks, GPNet with $\alpha=20$ outperforms other methods on Cornell and Texas, and GPNet with $\alpha=40$ and GPNet with $\alpha=70$ are better than other methods. These observations imply that self-nodes and neighboring nodes that are $p$-hops away play an equally important role in training homophily networks, self-nodes are harmful for performance improvement on medium-scale heterophily networks, and self-nodes are particularly important in training small-scale heterophily networks compared to the neighboring nodes.

\mysection{Effect of sign factor and filter} \figLabel \ref{fig:9} summarizes the performance of various sign factors. We observe that different sign factors help performance in training different types of networks. Notably, we find that their models achieve the same performance when the graph filters are symmetric to each other about the $x$-axis. This shows that the performance depends on the non-zero part of the spectral coefficients, regardless of the filter type. This theoretical explanation can be found in \secLabel \ref{sec:sgc}.

\mysection{Effect of terms} \figLabel \ref{fig:10} depicts the validation results achieved by varying the number of terms $k\in${1..9} of different channels. We see that the overall accuracy of the proposed models with different channels tends to improve as $k$ increases. Compared with GPNet with $m$=1, GPNet with $m$=2 attains higher performance across various terms. The results further support the design of multi-channel learning.
\begin{figure*}[!htb]
    \centering
    \includegraphics[width=\textwidth]{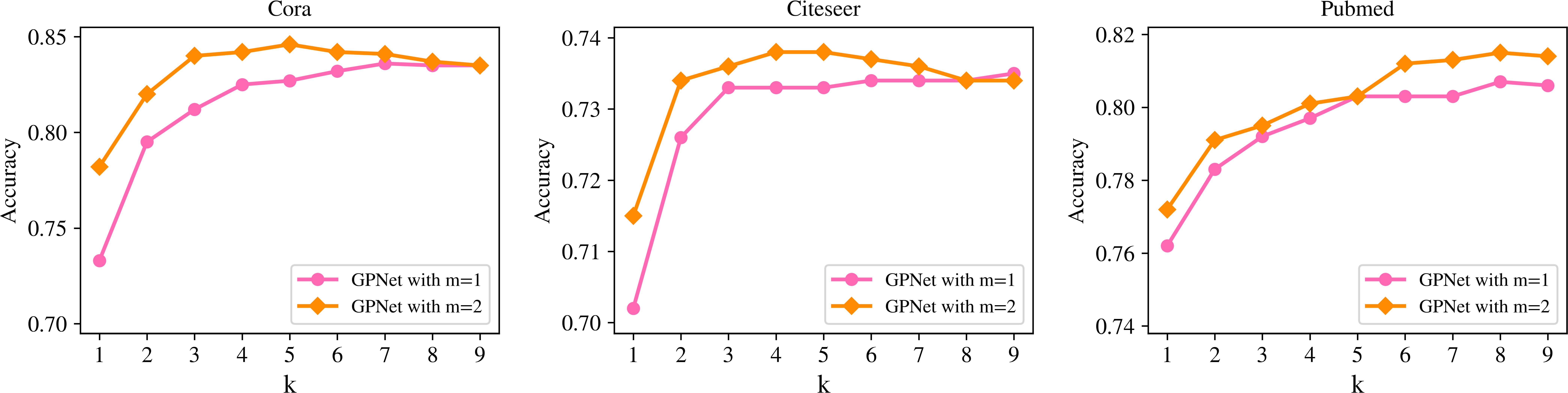}
    \caption{\textbf{Effect of terms ($k$)}.}
\label{fig:10}
\end{figure*}

\begin{table*}[!htbp]
	\renewcommand{\thetable}{6}
	\centering
	\caption{\textbf{Effect of GPNet with / without ReLU}. GPNet-ReLU is GPNet with ReLU.}
	\label{tab:6}
	\vspace{-0.2cm}
	\setlength{\tabcolsep}{1mm}{
		\begin{tabular}{l|lllll|lll|lll}
			\toprule
			Method  & Chameleon & Squirrel & Wisconsin	& Cornell & Texas & Pubmed (full) & Citeseer (full) & Cora (full) & Pubmed & Citeseer & Cora \\
			\midrule
			GPNet-ReLU & 60.56$\pm$4.1 & 31.44$\pm$5.0 & 83.41$\pm$3.1 & 82.04$\pm$1.4 & 86.41$\pm$0.5 & 86.33$\pm$1.8 & 75.56$\pm$1.2 & 78.53$\pm$2.1 & 72.2$\pm$11.0 & 74.3$\pm$0.1 & 84.4$\pm$0.1  \\
			GPNet (ours)  & \textbf{78.61$\pm$0.2} & \textbf{71.57$\pm$0.1} & \textbf{87.45$\pm$0.0} & \textbf{84.10$\pm$0.1} & \textbf{87.84$\pm$0.0} & \textbf{89.18$\pm$0.0} & \textbf{77.20$\pm$0.1} & \textbf{88.21$\pm$0.1} & \textbf{81.5$\pm$0.0} & \textbf{74.8$\pm$0.1} & \textbf{84.6$\pm$0.1}  \\
			\bottomrule
	\end{tabular}}
\end{table*}
\begin{table}[!htbp]
	\renewcommand{\thetable}{5}
	\centering
	\caption{\textbf{Effect of adjacency matrix with / without self-loops}. The average degree is equal to 2 times the number of edges divided by the number of nodes, and A. M. and SL are Adjacency Matrix and Self-loops, respectively.}
	\label{tab:5}
	\vspace{-0.2cm}
	\setlength{\tabcolsep}{2mm}{
		\begin{tabular}{l|c|cc}
			\toprule
			Dataset & Average Degree & A. M. with SL & A. M. without SL \\
			\midrule
			Cora & 4.01 & 84.6 $\pm$ 0.1 & 83.7 $\pm$ 0.4 \\
			Citeseer & 2.84 & 74.8 $\pm$ 0.1 & 73.2 $\pm$ 0.2 \\
			Pubmed & 4.50 & 81.5 $\pm$ 0.0 & 80.0 $\pm$ 0.0 \\
			Cora (full) & 4.01 & 88.21 $\pm$ 0.1 & 87.80 $\pm$ 0.0 \\
			Citeseer (full) & 2.84 & 77.20 $\pm$ 0.1 & 76.46 $\pm$ 0.1 \\
			Pubmed (full) & 4.50 & 89.18 $\pm$ 0.0 & 88.92 $\pm$ 0.0 \\
			\midrule
			Chamelean & 31.71 & 71.03 $\pm$ 0.1 & 78.61 $\pm$ 0.2    \\
			Squirrel & 76.14 & 63.33 $\pm$ 0.1 & 71.57 $\pm$ 0.1 \\
			Wisconsin & 3.98 & 87.45 $\pm$ 0.0 & 87.10 $\pm$ 0.1    \\
			Cornell & 3.22 & 84.10 $\pm$ 0.1 & 83.35 $\pm$ 0.1 \\
			Texas & 3.38 & 87.84 $\pm$ 0.0 & 84.86 $\pm$ 0.0 \\
			\bottomrule
	\end{tabular}}
\end{table}

\mysection{Effect of adjacency matrix}
We investigate the effect of whether adjacency matrix has self-loops. As shown in \tblLabel \ref{tab:5}, GPNet with $\hat{A}$ (adjacency matrix with self-loops) consistently works better on these datasets with small average degree while performing slightly worse on other datasets compared to GPNet without $\hat{A}$. This study demonstrates that adjacency matrix with self-loops is suitable for graphs with small average degree, while adjacency matrix without self-loops contributes to the performance of graphs with large average degree.

\mysection{Effect of ReLU} \tblLabel \ref{tab:6} shows the effect of whether GPNet has ReLU on all datasets. We see that GPNet without ReLU attains higher accuracy and more stable performance compared to GPNet with ReLU. This may be because the activation function destroys the captured feature information. Therefore, we propose to remove the nonlinear activation function in the design of GNNs.

\section{Conclusion}\label{sec:conclusion}
In this paper, to address the limitations of existing GNNs, we have investigated the proposed designs that improve the expressive power of learning graph representations. With the designs, we design the GPNet model that learns a distribution under optimal efficiency and parameters by precomputing the fixed feature extraction on both heterophily and homophily networks. We theoretically analyze that the model can correspond to different types of filters. Experiments on various graph learning tasks show that the model achieves significant performance gains over the state-of-the-art methods. For future work, an interesting direction is to apply the proposed model to directed networks.


%



\ifCLASSOPTIONcompsoc
  \section*{Acknowledgments}
\else
  \section*{Acknowledgment}
\fi

This research was funded by the National Natural Science Foundation of China (Grant No. 42274016/D0402, U1701266), the Program for Guangdong Introducing Innovative and Entrepreneurial Teams (Grant No.2019ZT08L213), the Natural Science Foundation of Guangdong Province (Grant No.2021A1515011483, 2022A1515011146), the Guangdong Provincial Key Laboratory of Intellectual Property and Big Data (Grant No. 2018B030322016), and Special Projects for Key Fields in Higher Education of Guangdong, China (Grant No. 2020ZDZX3077,2021ZDZX1042).

\ifCLASSOPTIONcaptionsoff
  \newpage
\fi



%
\bibliographystyle{IEEEtran}
\bibliography{references}
%




\end{document}